\definecolor{light-gray}{gray}{0.85}
\colorlet{linkequation}{blue}
\newtheorem{theorem}{Theorem}[section]
\newtheorem{lemma}[theorem]{Lemma}
\newtheorem{remark}[theorem]{Remark}
\theoremstyle{definition}
\newtheorem{definition}[theorem]{Definition}
\newtheorem{assumption}{Assumption}
\newcommand{\lep}[1]{\mathop  \le \limits^{(#1)}}
\newcommand{\gep}[1]{\mathop  \ge \limits^{(#1)}}
\newcommand{\ep}[1]{\mathop  = \limits^{(#1)}}
\newcommand{\ex}[1]{\mathbb{E}\left[ #1 \right] }
\newcommand{\bp}{{\psi} }
\newcommand{\vp}{{\varphi} }
\newcommand{\bt}{{\theta} }
\newcommand{\priv}{\rmfamily\scshape Privatizer}
\newcommand{\reg}{\rmfamily\scshape Regularizer}
\newcommand{\cA}{\mathcal{A}}
\newcommand{\cE}{\mathcal{E}}
\newcommand{\cM}{\mathcal{M}}
\newcommand{\cU}{\mathcal{U}}
\newcommand{\cX}{\mathcal{X}}
\newcommand{\Real}{\mathbb{R}}
\newcommand{\norm}[1]{\left\lVert#1\right\rVert}
\newcommand{\prob}[1]{\mathbb{P}\left[{#1}\right]}
\newcommand{\inner}[2]{\langle #1, #2 \rangle}
\newcommand{\argmax}{\mathop{\mathrm{argmax}}}
\newcommand{\argmin}{\mathop{\mathrm{argmin}}}
\newcommand{\beq}{\begin{equation}}
\newcommand{\eeq}{\end{equation}}
\newcommand{\beqn}{\begin{equation*}}
\newcommand{\eeqn}{\end{equation*}}
\newcommand{\beqa}{\begin{eqnarray}}
\newcommand{\eeqa}{\end{eqnarray}}
\newcommand{\beqan}{\begin{eqnarray*}}
\newcommand{\eeqan}{\end{eqnarray*}}
\renewcommand{\epsilon}{\varepsilon}
\renewcommand{\leq}{\leqslant}
\renewcommand{\hat}{\widehat}
\newcommand{\ra}{\rightarrow}
\begin{document}

\title{Differentially Private Reinforcement Learning with Linear Function Approximation}

\author{%
Xingyu Zhou\thanks{Wayne State University. Email: \texttt{xingyu.zhou@wayne.edu}}}

\date{}

\maketitle

\begin{abstract}
Motivated by the wide adoption of reinforcement learning (RL) in real-world personalized services, where users' sensitive and private information needs to be protected, we study regret minimization in finite horizon Markov decision processes (MDPs) under the constraints of differential privacy (DP).
Compared to existing private RL algorithms that work only on tabular finite-state, finite-actions MDPs, we take the first step towards privacy-preserving learning in MDPs with large state and action spaces. Specifically, we consider MDPs with linear function approximation (in particular linear mixture MDPs) under the notion of joint differential privacy (JDP), where the RL agent is responsible for protecting users' sensitive data.
We design two private RL algorithms that are based on value iteration and policy optimization, respectively, and show that they enjoy sub-linear regret performance while guaranteeing privacy protection. Moreover, the regret bounds are independent of the number of states, and scale at most logarithmically with the number of actions, making the algorithms suitable for privacy protection in nowadays large scale  personalized services. Our results are achieved via a general procedure for learning in linear mixture MDPs under \emph{changing regularizers}, which not only generalizes previous results for non-private learning, but also serves as a building block for general private reinforcement learning.
\end{abstract}

\section{Introduction}
Reinforcement learning (RL) is a control-theoretic protocol, which adaptively learns to make sequential decisions in an unknown environment through trial and error. 
RL has shown to have significant success for delivering a wide variety of personalized services, including online news and advertisement recommendation~\citep{li2010contextual}, 
medical treatment design~\citep{zhao2009reinforcement}, natural language processing~\citep{sharma2017literature}, and social robot~\citep{gordon2016affective}. 
In these applications, an RL agent improves its personalization algorithm by interacting with users to maximize the reward. In particular, in each round, the RL agent offers an action based on the user's current state, and then receives the feedback from the user (e.g., next state and reward, etc.). This feedback is used by the agent to learn the unknown environment and improve its action selection strategy.

However, in most practical scenarios,  the feedback from the users often encodes their sensitive information. Let us consider the user-agent interactions in Google Keyboard (Gboard) application. Here, each user is associated with a sequence of interactions with GBoard, e.g., the complete input of a sentence (or several sentences) can be viewed as an episode in the RL protocol. For each step in the episode, the state is the current typed words, and the action recommended by the GBoard agent is the most promising next words. The reward generated by the user depends on whether or not the user has adopted the offered next words as their actual input. It is easy to see that in this case each user's states and rewards contain her sensitive information. 
Similarly, in a personalized healthcare setting, the states of a patient include personal information such as age, gender, height, weight, state of the treatment etc. 
% Similarly, the states of a virtual keyboard user (e.g., Google GBoard users) are the words and sentences she typed in, which inevitably contain private information about the user. 
Another intriguing example is the social robot for second language education of children~\citep{gordon2016affective}. The states include facial expressions, and the rewards contain whether they have passed the quiz. In all of these RL-based personalized applications, users may not want any of their sensitive information to be inferred by adversaries. 
This directly results in an increasing concern about privacy protection in nowadays personalized services. To be more specific, although a user might be willing to share her own information to the agent to obtain a better tailored service, she would not like to allow third parties to infer her private information from the output of the learning algorithm. For example, in the GBoard example, one would like to ensure that an adversary with arbitrary side knowledge and computation power cannot infer her sensitive information by looking at the recommended words to other users.

To this end, \emph{differential privacy} (DP) \citep{dwork2008differential} has become a standard mechanism for designing interactive learning algorithms under a rigorous privacy guarantee for individual data. Most of the previous works on differentially private learning under partial feedback focus on the simpler bandit setting (i.e., no notion of states) \citep{tossou2016algorithms,tossou2017achieving,basu2019differential,mishra2015nearly,zhou2020local,shariff2018differentially}. For the general RL problem, there are only a few works that consider differential privacy \citep{vietri2020private,garcelon2020local}. More importantly, only the \emph{tabular MDPs} are considered in these works where the states and actions are discrete and finite, and the
value function is represented by a table. However, in real-world applications mentioned above, the number of states and actions are often very large and can even be infinite. As a result, the existing private RL algorithms could even fail and moreover their regret bounds~\citep{vietri2020private,garcelon2020local} become quite loose in this case since they are polynomial functions in terms of the number of states and actions. Thus, one fundamental question that remains elusive is: \emph{Is it possible to design private RL algorithms under large state and action spaces with tight regret guarantees?} Furthermore, existing regret bounds in both~\citep{vietri2020private,garcelon2020local} are only for value-based RL algorithms, and a study on policy-based private RL algorithms remains open. Recently, policy optimization (PO) has seen great success in many real-world applications, especially when coupled with function approximations~\citep{silver2017mastering,duan2016benchmarking,wang2018deep}, and a variety of PO based algorithms have been proposed~\citep{williams1992simple,kakade2001natural,schulman2015trust,schulman2017proximal,konda2000actor}. The theoretical understandings of PO have also been studied in both computational (i.e., convergence) perspective~\citep{liu2019neural,wang2019neural} and statistical (i.e., regret) perspective~\citep{cai2020provably,efroni2020optimistic}. Unfortunately, all of these algorithm are non-private and thus a direct application of them on the above personalized services may lead to privacy concerns. The recent work~\citep{sayakPO} takes the first step towards private policy optimization in the tabular setting. Thus, another fundamental question is: \emph{Can we also design sample-efficient policy-based private RL algorithms with linear function approximations?} 

In this paper, we take a systematic approach to affirmatively answer the above two fundamental questions. In particular, our contributions can be summarized as follows.
% Over the years, for various non-tabular environments,  efficient and provably optimal algorithms for \emph{reward maximization} or, equivalently, \emph{regret minimization} have been developed (see, e.g., \citep{abbasi2011regret,osband2014model,chowdhury2019online,wang2020episodic,jin2020provably}).

%Social robot

% Gboard
% Consider the example of Google Keyboard (Gboard). Here current state is the phrase entered by an user, actions are the collection of words suggested by the app and next state is the resulting phrase after incorporating one such suggestion.
\begin{itemize}
    \item We take the first step towards designing privacy preserving RL algorithms with rigorous statistical guarantees (i.e., regrets) in MDPs with large state and action spaces. Specifically, under the linear function approximations (i.e., linear mixture MDPs), we design both value iteration and policy optimization based optimistic private RL algorithms and show that they enjoy sublinear regret in total number of steps while guaranteeing \emph{joint differential privacy} (JDP), which, informally, implies that sensitive information about a given user is protected even if an adversary has access to the actions prescribed to all other users.
    \item The regret bound of the value-based algorithm -- \texttt{Private-LinOpt-VI} -- is independent of the number of states and actions whereas the regret of the policy-based algorithm -- \texttt{Private-LinOpt-PO} -- depends at most logarithmically on the number of actions making them suitable for large scale MDPs. Moreover, the cost of privacy under $(\epsilon,\delta)$-JDP for both \texttt{Private-LinOpt-VI} and \texttt{Private-LinOpt-PO} scale only with $\frac{\log(1/\delta)^{1/4}}{\epsilon^{1/2}}$ given the permitted privacy budgets $\epsilon > 0$ and $\delta \in (0,1]$. 
    \item The above results are achieved via a general procedure for learning in linear mixture MDPs under \emph{changing regularizers}, which not only generalizes previous results for non-private learning for both valued-based and policy-based RL, but also serves as a building block for general private RL under various privacy schemes and different privacy guarantees. 
    In particular, as opposed to the existing works, where same regularizers are used in every episode, our approach yields robust parameter estimates and confidence sets against possible adversarial attacks, paving the way for protecting data privacy.
    
\end{itemize}

\subsection{Related Work}
Beside the papers mentioned above, there are other related work on differentially private online learning~\citep{guha2013nearly,agarwal2017price} and multi-armed bandits~\citep{tossou2017achieving,hu2021optimal,sajed2019optimal}. In the linear bandit setting with contextual information,~\citep{shariff2018differentially} shows an impossibility result, i.e., no algorithm can achieve a standard $(\epsilon,\delta)$-DP privacy guarantee while guaranteeing a sublinear regret and thus the relaxed notion of JDP is considered in their paper. This result also implies that standard DP is incompatible with regret minimization in RL. Recently, another variant of DP, called \emph{local differential privacy} (LDP)~\citep{duchi2013local} has gained increasing popularity in personalized services due to its stronger privacy protection. It has been studied in various bandit settings recently~\citep{ren2020multi,zheng2020locally,zhou2020local,dubey2021no}. Under LDP, each user's raw data is directly protected before being sent to the learning agent. Thus, the learning agent only has access to privatized data to train its algorithm, which often leads to a worse regret guarantee compared to DP or JDP. Note that although our main focus is on private RL under JDP constraints, our proposed framework provides a straightforward way to design private RL algorithms under LDP constraints.

In the context of value-based tabular MDPs,~\citep{vietri2020private} gives the first regret bound under JDP while~\citep{garcelon2020local} presents the first regret bound under LDP constraint. As mentioned before, both bounds become loose in the presence of large state and action spaces. In addition to regret minimization,~\citep{balle2016differentially} considers private policy evaluation with linear function approximation. For MDPs with continuous state spaces, \citep{wang2019privacy} proposes a variant of $Q$-learning to protect \emph{only} the rewards information by directly injecting noise into value functions. Recently, a distributed actor-critic RL algorithm under LDP is proposed in~\citep{ono2020locally} but without any regret guarantee. While there are recent advances in regret guarantees for policy optimization~\citep{cai2020provably,efroni2020optimistic,zhou2021nearly,he2021nearly}, we are not aware of any existing work on private policy optimization. 
% Thus, our work takes the first step towards a unified framework for private policy-based RL algorithms in tabular MDPs with general privacy and regret guarantees. 

\textbf{Concurrent and independent work:} There are two concurrent and independent works~\citep{liao2021locally} and~\citep{luyo2021differentially}\footnote{While we upload to Arxiv after our acceptatnce, which is later than both works, we submit our work before Oct. 20, 2021, which is the fall deadline of ACM Sigmetrics conference.}. 
In particular, \cite{liao2021locally} also studied differentially private reinforcement learning under linear mixture MDPs. In particular, they focused on local differential privacy (LDP) and proposed a private variant of UCRL-VTR~\citep{ayoub2020model}. In contrast, we mainly focus on joint differential privacy (JDP) and investigated private variants for both value-based (i.e., UCRL-VTR) and policy-based (i.e., OPPO) algorithms. 
As already highlighted in our paper, although our main focus is JDP, our established framework (relying a general {\reg} and general regret bounds) also provides a convenient way to obtain the regret bounds under LDP constraints as in~\citep{liao2021locally}, following the same modifications in our previous work~\citep{sayakPO} (see Remarks~\ref{rem:ldp} and~\ref{rem:d}). The key difference compared to the JDP case is that now the {\priv} is placed at each local user and in total $O(K)$ private noise rather than $O(\log K)$ is added since each {\priv} at each user adds one independent noise. Nevertheless, for a direct and detailed treatment of privacy guarantee and regret bounds under LDP constraints, we refer readers to~\citep{liao2021locally}.
Finally, in addition to a regret upper bound,~\citep{liao2021locally} also presented a lower bound under LDP constraints.

In the concurrent work~\citep{luyo2021differentially}, the authors also proposed a unified analysis for private UCRL-VTR under both JDP and LDP constraints, which shares a similar high-level idea as ours but more detailed results on the LDP part is provided in~\citep{luyo2021differentially}. While we further study private policy-based algorithms with linear function approximation,~\cite{luyo2021differentially} further investigates private model-free algorithms and highlights interesting and unique issues in the model-free setting. 

\section{Problem Formulation and Preliminaries}

In this section, we present the basics of episodic Markov decision processes with linear function approximations and introduce various notions of differential privacy in reinforcement learning.

\subsection{Learning Problem and Linear Function Approximations}
In this paper, we consider an episodic MDP $(\mathcal{S},\mathcal{A},H, (\mathbb{P}_h)_{h=1}^H, (r_h)_{h=1}^H)$, where $\mathcal{S}$ and $\mathcal{A}$ are state and action spaces, $H$ is the length of each episode, $\mathbb{P}_h(s'|s,a)$ is the probability of transitioning to state $s'$ from state $s$ provided action $a$ is taken at step $h$ and $r_h(s,a)$ is the mean of the reward distribution at step $h$ supported on $[0,1]$.
% We consider episodic reinforcement learning (RL) in a finite horizon stochastic Markov decision process (MDP) \citep{puterman94, sutton1988learning} given by a tuple $(\mathcal{S},\mathcal{A},H, (P_h)_{h=1}^H, (c_h)_{h=1}^H)$, where $\mathcal{S}$ and $\mathcal{A}$ are state and action spaces with cardinalities $S$ and $A$, respectively, $H \in \Nat$ is the  episode length, $P_h(s'|s,a)$ is the probability of transitioning to state $s'$ from state $s$
% provided action $a$ is taken at step $h$ and $c_h(s,a)$ is the mean of the cost distribution at step $h$ supported on $[0,1]$. 
The actions are chosen following some policy $\pi=(\pi_h)_{h=1}^{H}$, where each $\pi_h$ is a mapping from the state space $\mathcal{S}$ into a probability distribution over the action space $\mathcal{A}$, i.e. $\pi_h(a|s)\ge 0$ and $\sum_{a \in \mathcal{A}}\pi_h(a|s)=1$ for all $s \in \mathcal{S}$. The agent would like to find a policy $\pi$ that maximizes the long term expected reward starting from every state $s \in \mathcal{S}$ and every step $h \in [H]$, defined as
\begin{align*}
    V^{\pi}_h(s):= \mathbb{E}\left[\sum\nolimits_{h'=h}^H r_{h'}(s_{h'}, a_h') \mid s_h = s,\pi\right],
    \end{align*}
where the expectation is with respect to the randomness of
the transition kernel and the policy.
We call $V_{h}^{\pi}$ the value function of policy $\pi$ at step $h$. Now, defining the $Q$-function of policy $\pi$ at step $h$ as
\begin{align*}
    &Q^{\pi}_h(s,a):= \mathbb{E}\left[\sum\nolimits_{h'=h}^H r_{h'}(s_{h'}, a_h')) \mid s_h = s, a_h = a,\pi\right],
    \end{align*}
we obtain $Q^{\pi}_h(s,a) = r_h(s,a)+ \sum_{s' \in \mathcal{S}}V^{\pi}_{h+1}(s') P_h(s'|s,a)$ and $V_h^\pi(s)=\sum_{a \in \mathcal{A}} Q^{\pi}_h(s,a) \pi_h(a|s)$.  
 
A policy $\pi^*$ is said to be optimal if it maximizes the value for all states $s$ and step $h$ simultaneously, and the corresponding optimal value function is denoted by $V^*_{h}(s)=\max_{\pi \in \Pi}V^{\pi}_{h}(s)$ for all $h \in [H]$, where $\Pi$ is the set of all non-stationary policies. The agent interacts with the environment for $K$ episodes to learn the unknown transition probabilities $\mathbb{P}_h(s'|s,a)$ and mean rewards $ r_h(s,a)$, and thus, in turn, the optimal policy $\pi^*$.
At each episode $k$, the agent chooses a policy $\pi^k=(\pi_h^k)_{h=1}^{H}$ and samples a trajectory $\lbrace s_1^k,a_1^k,r_1^k,\ldots,s_H^k,a_H^k,r_H^k,s_{H+1}^k \rbrace$ by interacting with the MDP using this policy. Here, at a given step $h$, $s_h^k$ denotes the state of the MDP, $a_h^k \sim \pi_h^k(\cdot|s_h^k)$ denotes the action taken by the agent, $r_h^k \in [0,1]$ denotes the (random) reward collected by the agent with the mean value $r_h(s_h^k,a_h^k)$ and $s_{h+1}^k\sim \mathbb{P}_h(\cdot | s_h^k,a_h^k)$ denotes the next state. The initial state $s_1^k$ is assumed to be fixed and history independent. We measure performance of the agent by the cumulative (pseudo) regret accumulated
over $K$ episodes, defined as
\begin{align*}
    R(T) := \sum\nolimits_{k=1}^K\left[V_1^{*}(s_1^k) -  V_1^{\pi^k}(s_1^k)\right],
\end{align*}
where $T=KH$ denotes the total number of steps. We seek
algorithms with regret that is sublinear in $T$, which
demonstrates the agent’s ability to act near optimally.

In this paper, we consider episodic MDPs with linear function approximations that allows us to work with large state and action spaces. Specifically, we consider the setting where both the transition dynamics and reward functions are linear with respect to some corresponding feature maps. This setting is often called \emph{linear mixture MDPs} (a.k.a., linear kennel MDPs), see~\citep{ayoub2020model,zhou2021provably} for various examples of linear mixture MDPs including the tabular MDPs.

% we consider a special class of MDPs called \emph{linear mixture MDPs} (a.k.a., linear kernel MDPs) []. Roughly speaking, it means that the transition kernel $\{\mathbb{P}_h\}$ can be represented as a linear function of a given feature map $\bp:\mathcal{S}\times \mathcal{A}\times \mathcal{S} \to \Real^d$. Formally, we have the following assumption. 

\begin{assumption}[Linear mixture MDPs]
\label{def:lin}
We assume that the MDP $(\mathcal{S},\mathcal{A},H, \{\mathbb{P}_h\}_{h=1}^H, \{r_h\}_{h=1}^H)$ is a linear mixture MDP with a \emph{known} transition feature mapping $\bp:\mathcal{S}\times \mathcal{A}\times \mathcal{S} \to \Real^{d_1}$ and a \emph{known} value feature mapping $\vp: \mathcal{A}\times \mathcal{S} \to \Real^{d_2}$. That is, for any $h \in [H]$, there exists an unknown vector $\bt_{h,p} \in \mathbb{R}^{d_1}$ with $\norm{\bt_{h,p}}_2 \le \sqrt{d_1}$ such that 
\begin{align*}
    \mathbb{P}(s'| s,a) = \inner{\bp(s,a,s')}{\bt_{h,p}}
\end{align*}
for any $(s,a,s') \in \mathcal{S}\times \mathcal{A}\times \mathcal{S}$, and there exists an unknown vector $\bt_{h,r} \in \Real^{d_2}$ with $\norm{\bt_{h,r}}_2 \le \sqrt{d_2}$ such that 
\begin{align*}
   r_h(s,a) = \inner{\vp(s,a)}{\bt_{h,r}}
\end{align*}
for any $(s,a) \in \mathcal{S}\times \mathcal{A}$. Moreover, we assume $ \norm{\vp(s,a)} \le 1$ for any $(s,a) \in \mathcal{S} \times \mathcal{A}$ and  let $\phi(s,a) := \int_{s'} \bp(s' | s,a)V(s')\, ds'$ and assume that for any function $V: \mathcal{S} \to [0,H]$, $\norm{\phi(s,a)}_2 \le \sqrt{d_1}H$ for any $(s,a) \in \mathcal{S} \times \mathcal{A}$, and $\max\{d_1,d_2\} \le d$.
\end{assumption}

The class of linear mixture MDPs has been adopted in a series of recent (non-private) RL works with linear function approximation including both value-based algorithms~\citep{jia2020model,ayoub2020model,zhou2021provably,zhou2021nearly,he2021logarithmic} and policy-based algorithms~\citep{cai2020provably,ding2021provably}. Another line of works with linear function approximation adopt \emph{linear MDP} model~\citep{jin2020provably,yang2019sample}, which assumes that $ \mathbb{P}(\cdot| s,a) = \inner{\tau(s,a)}{\mu(\cdot)}$ where $\tau(s,a)$ is a known feature mapping and $\mu(\cdot)$ is an unknown measure. Note that the two commonly used linear models are different and one cannot be recovered by the other, see a detailed discussions in~\citep{zhou2021provably}.

\subsection{Differential Privacy in Episodic RL}

In RL-based personalized services, it is common to view each episode $k \in [K]$ as a
trajectory associated to a specific user (cf. Google GBoard application). To this end, we let $U_K = (u_1, \ldots,u_K) \in \cU^K$ to denote a sequence of $K$ unique\footnote{Uniqueness is assumed for simplicity, as for a returning user one can group her with her previous occurrences and use the notion of group privacy to provide tight privacy guarantee.} users participating in the private RL protocol with an RL agent $\cM$, where $\cU$ is the set of all users. {{Formally speaking, as in~\citep{vietri2020private,garcelon2020local}, a user can be captured with a tree of depth $H$ encoding the state and reward responses she would give to all the $A^H$ possible sequences of actions the agent can choose. During the learning process, user $k$ for episode $k$ is identified by the reward and state responses $(r_h^k,s_{h+1}^k)_{h=1}^{H}$ she gives to the actions $( a_{h}^k)_{h=1 }^{H}$ chosen by the agent under policy $\pi^k$}}. 
We let $\cM(U_k)=(a_1^1,\ldots,a_H^K) \in \cA^{KH}$ to denote the set of all actions chosen by the agent $\cM$ when interacting with the user sequence $U_k$.
% We let $\cM(u_k)=(a_{h}^k)_{h=1}^{H} \in \cA^{H}$ to denote the set of actions chosen by the agent $\cM$ when interacting with the user $u_k$. Furthermore, with a slight abuse of notation, we let $\cM(U_k)= (\cM(u_1),\ldots,\cM(u_K)) \in \cA^{KH}$ to denote the set of all actions chosen by the agent. 
Informally, we will be interested in (centralized) randomized mechanisms (in this case, RL agents) $\cM$ % \xingyu{agent and randomized mechanism has the same notation $\mathcal{M}$? I know in {\central}, it is the case, shall we clarify it like: In DP, it often relies on a {\central}... } 
so that the knowledge of the output $\cM(U_k)$ and all but the $k$-th user $u_k$ does not reveal `much' information about $u_k$. 
We formalize this in the following definition.
\begin{definition}[Differential Privacy (DP)]\label{def:DP}
For any $\epsilon \ge 0$ and $\delta \in [0,1]$, a mechanism $\cM : \cU^K \ra \cA^{KH}$ is $(\epsilon,\delta)$-differentially private if for all $U_K,U'_K \in \cU^K$ differing on a single user and for all subset of actions $\cA_0 \subset \cA^{KH}$, 
\beqn
\prob{\cM(U_K)\in \cA_0} \le \exp(\epsilon) \prob{\cM(U'_K)\in \cA_0} + \delta~.
\eeqn
\end{definition}
This is a direct adaptation of the classic notion of differential privacy \citep{dwork2014algorithmic}. 
{{However, we need to relax this definition for our purpose, because standard DP requires that changing the states (or reward response) of a user, the recommended actions to this user will not change too much, which is in conflict with personalization.}} In fact, even in the simpler linear contextual bandit setting, a negative result says that no algorithm can achieve a sublinear regret while guaranteeing standard DP~\citep{shariff2018differentially}.
% since the knowledge of  can reveal a lot of information about her state and cost responses, making DP very hard to achieve.
% We now relax this definition motivated by the fact that the actions recommended to a given user $u_k$ is only observed by her. 
Hence, to preserve the privacy of individual users while obtaining a nontrivial regret in our RL setting, as in previous works~\citep{shariff2018differentially,vietri2020private}, we consider the notion of \emph{joint differential privacy} (JDP) \citep{kearns2014mechanism}, which requires that simultaneously for all user $u_k$, the
joint distribution of the actions recommended to all users other than $u_k$ be differentially private with respect to the user $u_k$.
% not change substantially upon changing the state and reward responses
% of the user $u_k$. 
It weakens the constraint of DP only in that
the actions suggested specifically to $u_k$ may be sensitive in her type (state and reward responses), which is indeed the intrinsic requirement of personalization services. However, JDP is
still a very strong definition since it protects $u_k$ from any arbitrary collusion of other users against her, so long as she does
not herself make the actions suggested to her public.
To this end, we let $\cM_{-k}(U_K):=\cM(U_K)\setminus ( a_{h}^k)_{h=1}^{H}$ to denote all the actions chosen by the agent $\cM$ excluding those recommended to $u_k$ and formally define JDP as follows.
\begin{definition}[Joint Differential Privacy (JDP)]\label{def:JDP}
For any $\epsilon \!\ge\! 0$ and $\delta \in [0,1]$,
a mechanism $\cM \!:\! \cU^K \!\ra\! \cA^{KH}$ is $(\epsilon,\delta)$-joint differentially private if for all $k \!\in\! [K]$, for all user sequences $U_K,U'_K \!\in\! \cU^K$ differing only on the $k$-th user and for all set of actions $\cA_{-k} \!\subset\! \cA^{(K-1)H}$ given to all but the $k$-th user, 
\beqn
\prob{\cM_{-k}(U_K)\in \cA_{-k}} \le \exp(\epsilon) \prob{\cM_{-k}(U'_K)\in \cA_{-k}} + \delta.
\eeqn
\end{definition}
{Intuitively speaking, JDP in RL means that upon changing one of the participating users (say user $k$), the information observed by the other $K-1$
participants will not change substantially.} JDP has been used extensively in private mechanism design \citep{kearns2014mechanism}, in private matching and allocation problems \citep{hsu2016private}, in designing privacy-preserving algorithms
for linear contextual bandits \citep{shariff2018differentially}, linear quadratic systems~\citep{chowdhury2021adaptive} and it has been introduced in private tabular RL by \citet{vietri2020private}.

In this work, we mainly focus on algorithms that are $(\epsilon,\delta)$-JDP to illustrate the power of our proposed general framework for private RL algorithms. However, in addition to JDP guarantees, our framework also easily carries out to RL with stringer notion of privacy protection, called the \emph{local differential privacy} (LDP)~\citep{duchi2013local}, which is often applied to scenarios where the users may not even be willing to share it's data with the agent directly. In this setting, each user is assumed to have her own privacy mechanism that can do randomized mapping on its data to guarantee privacy. To this end, we denote by $X$ a trajectory $(s_h,a_h,r_h,s_{h+1})_{h=1}^{H}$ and by $\cX$ the set of all possible trajectories. We write $\cM'(X)$ to denote the privatized trajectory generated by a (local) randomized mechanism $\cM'$. With this notation, we now formally define LDP for RL protocol.
% \xingyu{Shall we add words on {\local} here?}
\begin{definition}[Local Differential Privacy (LDP)]\label{def:LDP}
For any $\epsilon \ge 0$ and $\delta \in [0,1]$, a mechanism $\cM'$ is $(\epsilon,\delta)$-local differentially private if for all trajectories $X,X' \in \cX$ and for all possible subsets $\cE_0 \subset \lbrace \cM'(X)|X \in \cX\rbrace$,
\beqn
\prob{\cM'(X)\in \cE_0} \le \exp(\epsilon) \prob{\cM'(X')\in \cE_0} + \delta .
\eeqn
\end{definition}
LDP ensures that if any adversary (can be the RL agent itself) observes the output of the privacy mechanism $\cM'$ for two different trajectories, then it is statistically difficult for it to guess which output is from which trajectory. This has been used extensively in multi-armed bandits \citep{zheng2020locally,ren2020multi}, and introduced in private tabular RL by \citet{garcelon2020local}.

\section{Confidence Bounds with Changing Regularizers}
\label{sec:conf}
In this section, we focus on the key component of both algorithm design and regret analysis under linear mixture MDPs for both value-based and policy-based algorithms. That is, the confidence bounds over the regularized least-squared estimators for unknown $\theta_{h,p}$ and $\theta_{h,r}$. In particular, we consider a general setting with \emph{changing regularizers}, which not only allows us to generalize previous non-private ones, but more importantly servers a general procedure for designing private RL algorithms with linear function approximations in later sections. 

To begin with, let's briefly introduce the intuitions behind the regularized linear regression problem under linear mixture MDPs (cf.~\citep{ayoub2020model,zhou2021nearly}). In particular, for any step $h$ at episode $k$, the following equation holds based on the linear mixture MDP model assumption in Assumption~\ref{def:lin}
\begin{align*}
    [\mathbb{P}_hV_{h+1}^k](s_h^k, a_h^k) = \inner{\int_{s'} \bp(s' | s_h^k,a_h^k)V_{h+1}^k(s')\, ds'}{\theta_{h,p}} = \inner{\phi_h^k(s_h^k, a_h^k)}{\theta_{h,p}},
\end{align*}
where $\{V_h^k(\cdot)\}_{(k,h) \in [K]\times [H]}$ is any value function maintained based on data collected before episode $k$, and $\phi_h^k(\cdot, \cdot):=\int_{s'} \bp(s' | \cdot,\cdot)V_{h+1}^k(s')\, ds'$.  Therefore, learning the true $\theta_{h,p}$ could be translated into solving a `linear bandit' problem where the feature is $\phi_h^k(s_h^k,a_h^k)$ and the noise is $V_{h+1}^k(s_{h+1}^k) - [\mathbb{P}_hV_{h+1}^k](s_h^k, a_h^k) $. To this end, one can solve the following regularized linear regression problem:
\begin{align*}
    {\bt}_{h,p}^k &:= \argmin_{\bt \in \Real^{d_1}} \norm{\bt}_{Z_{h,p}^k} + \sum_{j=1}^{k-1}[V_{h+1}^j(s_{h+1}^j)  - \inner{\phi_h^k(s_h^j,a_h^j)}{\bt} ]^2\\
    &=[\underbrace{Z_{h,p}^k + \sum_{j=1}^{k-1} \phi_h^k(s_h^j,a_h^j)\phi_h^k(s_h^j,a_h^j)^{\top}}_{\mathcal{T}_{1,p}} ]^{-1} [ \underbrace{\sum_{j=1}^{k-1} \phi_h^k(s_h^j,a_h^j) V_{h+1}^j(s_{h+1}^j)}_{\mathcal{T}_{2,p}}],
\end{align*}
where $Z_{h,p}^k \in \mathbb{R}^{d_1 \times d_1}$ is possibly changing positive definite matrix.  Setting $\Lambda_{h,p}^k= \mathcal{T}_{1,p}$ and $u_{h,p}^k = \mathcal{T}_{2,p} + z_{h,p}^k$ for some $z_{h,p}^k \in \mathbb{R}^{d_1}$ , we can define a new estimator $\hat{\bt}_{h,p}^k$ under changing regularizers $\{Z_{h,p}^k\}_{k \in [K], h\in [H]}$ and $\{z_{h,p}^k\}_{k \in [K], h\in [H]}$ as
\begin{align}
\label{eq:p_est}
    \hat{\bt}_{h,p}^k = [\Lambda_{h,p}^k]^{-1}u_{h,p}^k.
\end{align}
In particular, choosing $Z_{h,p}^k = \lambda I$ and $z_{h,p}^k = 0$ for all $h, k$ reduces to the standard  non-private value-based algorithms~\citep{ayoub2020model,zhou2021nearly,cai2020provably} and policy-based algorithm~\citep{cai2020provably,ding2021provably} under linear mixture MDPs. 
{Here, $Z_{h,p}^k$ serves as a general regularizer while  $z_{h,p}^k$ can be used to account for the noisy version of $\mathcal{T}_{2,p}$ where the noise could arise from approximation error of $\phi_h^k$, or, from the added Gaussian noise in DP mechanisms in later sections.}

In this paper, we also consider a random reward as opposed to a deterministic reward in previous works. To this end, for the estimator of $\theta_{h,r}$, we consider the minimizer of the following linear regression problem with a positive definite matrix $Z_{h,r}^k$.
\begin{align*}
    {\bt}_{h,r}^k &:= \argmin_{\bt \in \Real^{d_2}} \norm{\bt}_{Z_{h,r}^k} + \sum_{j=1}^{k-1}[r_h^j(s_h^j, a_h^j)  - \inner{\vp(s_h^j,a_h^j)}{\bt} ]^2\\
    &=[\underbrace{Z_{h,r}^k + \sum_{j=1}^{k-1} \vp(s_h^j,a_h^j)\vp(s_h^j,a_h^j)^{\top}}_{\mathcal{T}_{1,r}} ]^{-1} [ \underbrace{\sum_{j=1}^{k-1} \vp(s_h^j,a_h^j) r_h^j(s_h^j,a_h^j)}_{\mathcal{T}_{2,r}}].
\end{align*}
Setting $\Lambda_{h,r}^k= \mathcal{T}_{1,r}$ and $u_{h,r}^k = \mathcal{T}_{2,r} + z_{h,r}^k$ for some $z_{h,r}^k \in \mathbb{R}^{d_2}$, we can define a new estimator $\hat{\bt}_{h,r}^k$ under changing regularizers $\{Z_{h,r}^k\}_{k \in [K], h\in [H]}$ and $\{z_{h,r}^k\}_{k \in [K], h\in [H]}$ as
\begin{align}
\label{eq:r_est}
    \hat{\bt}_{h,r}^k = [\Lambda_{h,r}^k]^{-1}u_{h,r}^k.
\end{align}

One key question now is to guarantee that the two estimators $\hat{\theta}_{h,p}^k$ and $\hat{\theta}_{h,r}^k$ are close to the true parameters $\theta_{h,p}$ and $\theta_{h,r}$ with properly constructed ellipsoids. That is, even under changing regularizers, both $\theta_{h,p}$ and $\theta_{h,r}$ lie in the following ellipsoids with probability at least $1-\alpha$ for any $\alpha \in (0,1]$, respectively
\begin{align*}
    \mathcal{C}_{h,p}^k = \left\{\theta:  \norm{\theta - \hat{\bt}_{h,p}^k}_{\Lambda_{h,p}^k}  \le \beta_p(\alpha)\right\} \text{ and }  \mathcal{C}_{h,r}^k = \left\{\theta:  \norm{\theta - \hat{\bt}_{h,r}^k}_{\Lambda_{h,r}^k}  \le \beta_r(\alpha)\right\},
\end{align*}
where $\beta_p(\alpha)$ and $\beta_r(\alpha)$ are properly chosen parameters to balance between exploration and exploitation. Throughout this paper, we will make the following assumptions on the sequence of regularizers $\{Z_{h,p}^k\}_{k \in [K], h\in [H]}$, $\{z_{h,p}^k\}_{k \in [K], h\in [H]}$, $\{Z_{h,r}^k\}_{k \in [K], h\in [H]}$ and $\{z_{h,r}^k\}_{k \in [K], h\in [H]}$. 

\begin{assumption}[Regularity]
\label{ass:regularizer}
For any $\alpha \in (0,1]$, there exist constants $\lambda_{\max,p}, \lambda_{\max,r}$, $\lambda_{\min,p}, \lambda_{\min,r}$ and $\nu_p, \nu_r$ depending on $\alpha$ such that, with probability at least $1-\alpha$, for all $k \in [K]$ and $h \in [H]$, the following relations hold simultaneously
\begin{align*}
    \norm{Z_{h,p}^k } \leq \lambda_{\max,p}, \;\; \norm{(Z_{h,p}^k)^{-1}} \leq 1/\lambda_{\min,p} \;\; \text{and}\;\; \norm{z_{h,p}^k}_{(Z_{h,p}^k)^{-1}} \leq \nu_{p},
\end{align*}
and
\begin{align*}
    \norm{Z_{h,r}^k } \leq \lambda_{\max,r}, \;\; \norm{(Z_{h,r}^k)^{-1}} \leq 1/\lambda_{\min,r} \;\; \text{and}\;\; \norm{z_{h,r}^k}_{(Z_{h,r}^k)^{-1}} \leq \nu_{r}.
\end{align*}
\end{assumption}

With this regularity assumption, we have the following high-probability confidence bounds over the two estimators, i.e., $\hat{\theta}_{h,p}^k$ in~\eqref{eq:p_est} and $\hat{\theta}_{h,r}^k$ in~\eqref{eq:r_est}. 

\begin{lemma}[Concentration under changing regularizers]
\label{lem:CI}
Let Assumptions~\ref{def:lin} and~\ref{ass:regularizer} hold and $V_h^k(\cdot)$ in~\eqref{eq:p_est} be within $[0,H]$ for all $h \in [H], k \in [K]$. Then, for any $\alpha \in (0,1]$, with probability at least $1-3\alpha$, the following relations hold simultaneously for all $k \in [K]$ and $h \in [H]$
\begin{align*}
    \norm{\bt_{h,p} - \hat{\bt}_{h,p}^k}_{\Lambda_{h,p}^k} \le \beta_p(\alpha), \text{ and }  \norm{\bt_{h,r} - \hat{\bt}_{h,r}^k}_{\Lambda_{h,r}^k} \le \beta_r(\alpha),
\end{align*}
where 
\begin{align*}
    &\beta_{p}(\alpha) = \frac{H}{2}\sqrt{2\ln\left(\frac{H}{\alpha}\right) +  d_1\ln\left(1 + \frac{KH^2}{\lambda_{\min,p}}\right)} + \sqrt{d_1 \lambda_{\max,p} }+ \nu_{p},\\
    &\beta_{r}(\alpha) = \frac{1}{2}\sqrt{2\ln\left(\frac{H}{\alpha}\right) +  d_2\ln\left(1 + \frac{K}{d_2\lambda_{\min,r}}\right) } + \sqrt{d_2 \lambda_{\max,r} }+ \nu_{r}.
\end{align*}
\end{lemma}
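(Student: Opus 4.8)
The plan is to treat each estimator as the solution of a ridge-type linear regression with \emph{predictable} regressors and \emph{martingale-difference} noise, and then to control $\hat\theta_{h,p}^k-\theta_{h,p}$ (and its reward analogue) via a self-normalized bound adapted to the changing regularizer. Writing $\eta_h^j := V_{h+1}^j(s_{h+1}^j) - [\mathbb{P}_h V_{h+1}^j](s_h^j,a_h^j)$ for the regression noise and $\phi_h^j(\cdot,\cdot):=\int_{s'}\bp(s'|\cdot,\cdot)V_{h+1}^j(s')\,ds'$ for the episode-$j$ feature, I would use the exact identity $[\mathbb{P}_h V_{h+1}^j](s_h^j,a_h^j)=\inner{\phi_h^j(s_h^j,a_h^j)}{\theta_{h,p}}$ supplied by Assumption~\ref{def:lin}. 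This is the crucial feature of linear mixture MDPs: the ``linear bandit'' reduction is exact for \emph{every} value function $V_{h+1}^j$, so no covering argument over value functions is needed. Substituting $u_{h,p}^k=\mathcal{T}_{2,p}+z_{h,p}^k$ and $\Lambda_{h,p}^k=\mathcal{T}_{1,p}$ into \eqref{eq:p_est} yields
\begin{align*}
\hat\theta_{h,p}^k - \theta_{h,p} = (\Lambda_{h,p}^k)^{-1}\Big[\sum_{j=1}^{k-1}\phi_h^j(s_h^j,a_h^j)\,\eta_h^j + z_{h,p}^k - Z_{h,p}^k\theta_{h,p}\Big].
\end{align*}
Taking the $\Lambda_{h,p}^k$-norm, using $\norm{(\Lambda_{h,p}^k)^{-1}v}_{\Lambda_{h,p}^k}=\norm{v}_{(\Lambda_{h,p}^k)^{-1}}$ and the triangle inequality, splits the error into a martingale term, a perturbation term $\norm{z_{h,p}^k}_{(\Lambda_{h,p}^k)^{-1}}$, and a regularization-bias term $\norm{Z_{h,p}^k\theta_{h,p}}_{(\Lambda_{h,p}^k)^{-1}}$.

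Next I would bound the three terms. For the two non-martingale terms the Loewner ordering $\Lambda_{h,p}^k\succeq Z_{h,p}^k$ gives $(\Lambda_{h,p}^k)^{-1}\preceq(Z_{h,p}^k)^{-1}$, whence $\norm{z_{h,p}^k}_{(\Lambda_{h,p}^k)^{-1}}\le\norm{z_{h,p}^k}_{(Z_{h,p}^k)^{-1}}\le\nu_p$ by Assumption~\ref{ass:regularizer}, and $\norm{Z_{h,p}^k\theta_{h,p}}_{(\Lambda_{h,p}^k)^{-1}}^2\le\theta_{h,p}^\top Z_{h,p}^k\theta_{h,p}\le\lambda_{\max,p}\norm{\theta_{h,p}}_2^2\le d_1\lambda_{\max,p}$, using $\norm{\theta_{h,p}}_2\le\sqrt{d_1}$. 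For the martingale term I would invoke the regularity bound a second time: since $Z_{h,p}^k\succeq\lambda_{\min,p}I$, we have $\Lambda_{h,p}^k\succeq\bar\Lambda_{h,p}^k:=\lambda_{\min,p}I+\sum_{j=1}^{k-1}\phi_h^j(\phi_h^j)^\top$, so the martingale term is at most $\norm{\sum_j\phi_h^j\eta_h^j}_{(\bar\Lambda_{h,p}^k)^{-1}}$, which has a \emph{fixed} regularizer $\lambda_{\min,p}I$. To this I would apply the standard self-normalized bound for vector-valued martingales (Abbasi-Yadkori et al., valid uniformly over $k$), with $\tfrac{H}{2}$-sub-Gaussian noise since $V_{h+1}^j\in[0,H]$, combined with the AM-GM determinant bound $\det(\bar\Lambda_{h,p}^k)\le(\lambda_{\min,p}+KH^2)^{d_1}$ that follows from $\norm{\phi_h^j}_2\le\sqrt{d_1}H$. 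This produces exactly the leading term $\frac{H}{2}\sqrt{2\ln(H/\alpha)+d_1\ln(1+KH^2/\lambda_{\min,p})}$, and adding the two previous bounds gives $\beta_p(\alpha)$.

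The reward estimator is handled identically: replace $\phi_h^j$ by $\vp(s_h^j,a_h^j)$ (so $\norm{\vp}_2\le1$ gives the determinant bound $(\lambda_{\min,r}+K/d_2)^{d_2}$), the noise by the $\tfrac12$-sub-Gaussian reward noise $r_h^j-r_h(s_h^j,a_h^j)$ (rewards in $[0,1]$), and $\norm{\theta_{h,p}}_2\le\sqrt{d_1}$ by $\norm{\theta_{h,r}}_2\le\sqrt{d_2}$, yielding $\beta_r(\alpha)$. The total failure probability $3\alpha$ is a union over three events: the regularity event of Assumption~\ref{ass:regularizer} (probability $\alpha$), the transition self-normalized bound, and the reward self-normalized bound; for the latter two the self-normalized inequality already holds uniformly in $k$, so only a union over $h\in[H]$ is needed, which is what turns $\ln(1/\alpha)$ into $\ln(H/\alpha)$ inside the square roots.

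The hard part is the self-normalized control of the martingale term under \emph{changing} regularizers, since one cannot apply the classical fixed-$\lambda$ bound directly when $Z_{h,p}^k$ varies with $k$ and is not $\lambda I$. The resolution is the two-sided use of the regularity constants: $\Lambda_{h,p}^k\succeq\lambda_{\min,p}I$ lowers the effective regularizer to the worst-case constant $\lambda_{\min,p}$ so a fixed-regularizer self-normalized bound applies, while $\Lambda_{h,p}^k\succeq Z_{h,p}^k$ absorbs the perturbation and bias into $\nu_p$ and $\sqrt{d_1\lambda_{\max,p}}$. A secondary point to verify carefully is the filtration setup guaranteeing that $\eta_h^j$ is conditionally mean-zero given the history up to $s_{h+1}^j$ while $\phi_h^j$ (built from the data-dependent $V_{h+1}^j$) is predictable, which is precisely what makes the martingale machinery applicable here.
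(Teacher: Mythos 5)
Your proposal is correct and follows essentially the same route as the paper's proof: the identical error decomposition into a self-normalized martingale term plus bias term $\norm{Z_{h,p}^k\theta_{h,p}}_{(\Lambda_{h,p}^k)^{-1}}$ and perturbation term $\norm{z_{h,p}^k}_{(\Lambda_{h,p}^k)^{-1}}$, the same two-sided use of Assumption~\ref{ass:regularizer} (bounding $(\Lambda_{h,p}^k)^{-1}\preceq(Z_{h,p}^k)^{-1}$ for the bias/perturbation terms and $\Lambda_{h,p}^k\succeq G_{h,p}^k+\lambda_{\min,p}I$ to reduce the martingale term to a fixed-regularizer self-normalized bound), the same sub-Gaussian constants $H/2$ and $1/2$, the same trace-determinant bound, and the same union-bound accounting yielding $1-3\alpha$ with the $\ln(H/\alpha)$ factor from the union over $h$.
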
 
\begin{proof}
See Appendix~\ref{sec:proofCI}.
\end{proof}

In the next two sections, we will utilize the two confidence bounds to investigate general value-based and policy-based RL algorithms under changing regularizers. These results not only generalize the standard results for fixed regularizers, but allow us to present a general framework for designing private RL algorithms with differential privacy guarantees in Section~\ref{sec:priv}.

\section{Optimistic Value-based Algorithm with Changing Regularizers}

In this section, we propose an optimistic value-based RL algorithm under linear mixture MDPs with changing regularizers. Our algorithm can be regarded as a generalization of  
UCRL-VTR~\citep{ayoub2020model} where fixed regularizers are adopted. This generalization serves a basis for designing private value-based algorithm in Section~\ref{sec:priv}. 

As shown in Algorithm~\ref{alg:UCB-VI}, compared to UCRL-VTR, the main difference of our \texttt{LinOpt-VI-Reg} is the adoption of the {\reg} that properly chooses the four sequences of regularizers $\{Z_{h,p}^k\}_{k \in [K], h\in [H]}$, $\{z_{h,p}^k\}_{k \in [K], h\in [H]}$, $\{Z_{h,r}^k\}_{k \in [K], h\in [H]}$ and $\{z_{h,r}^k\}_{k \in [K], h\in [H]}$. Similar to UCRL-VTR, \texttt{LinOpt-VI-Reg} follows the procedure of optimistic value iteration. Specifically, at each episode $k$, it first computes the two estimators and the corresponding exploration bonus terms based on regularized statistics. Then, it computes the estimated $Q$-function and value function using optimistic Bellman recursion. Next, a greedy policy $\pi^k$ is obtained directly from the estimated $Q$-function. Finally, a trajectory is rolled out by acting the policy $\pi^k$ and then the {\reg} translates these statistics into regularized ones via the way in Section~\ref{sec:conf}, i.e., using the four regularizers $\{Z_{h,p}^k\}_{k \in [K], h\in [H]}$, $\{z_{h,p}^k\}_{k \in [K], h\in [H]}$, $\{Z_{h,r}^k\}_{k \in [K], h\in [H]}$ and $\{z_{h,r}^k\}_{k \in [K], h\in [H]}$. 
The following theorem presents a general regret bound when the {\reg} in \texttt{LinOpt-VI-Reg} satisfies Assumption~\ref{ass:regularizer}.
% As shown in Algorithm~\ref{alg:UCB-VI}, the {\reg} is responsible for updating the statistics with changing regularizers as discussed in the last sectio
\begin{algorithm}[t!]
\caption{\texttt{LinOpt-VI-Reg} }
\label{alg:UCB-VI}
\DontPrintSemicolon
\KwIn{Number of episodes $K$, time horizon $H$,  a {\reg}, confidence level $\alpha \in (0,1]$}
Initialize regularized statistics $ {\Lambda}_{h,p}^{1}=0$, $ {\Lambda}_{h,r}^{1}=0$, $ {u}_{h,p}^{1}=0$ and  $ {u}_{h,r}^{1}=0$  \;
\For{$k=1,\ldots, K$}{
Initialize value estimates: $ {V}^k_{H+1}(s) = 0$ $\forall s$  \;
\For{$h=H,H-1,\ldots, 1$}{
    $ \hat{\bt}_{h,p}^k \leftarrow ( {\Lambda}_{h,p}^k)^{-1}  {u}_{h,p}^k$\;  
    $ \hat{\bt}_{h,r}^k \leftarrow ( {\Lambda}_{h,r}^k)^{-1}  {u}_{h,r}^k$\;
    ${\phi}_h^k \leftarrow \int_{s'} \bp(s' | s,a) {V}_{h+1}^k(s')\, ds$ \;
    $ {\Gamma}_{h,p}^{k} \leftarrow \beta_p(\alpha) [\phi_h^k(\cdot,\cdot)^{\top} ( {\Lambda}_{h,p}^k)^{-1} \phi_h^k(\cdot,\cdot) ]^{1/2}$ \;
    $ {\Gamma}_{h,r}^{k} \leftarrow \beta_r(\alpha) [\varphi(\cdot,\cdot)^{\top} ( {\Lambda}_{h,r}^k)^{-1} \varphi(\cdot,\cdot) ]^{1/2}$ \;
    $ {Q}_h^k(\cdot, \cdot) \leftarrow \max\{0, \min\{H-h+1, \vp(\cdot,\cdot)^{\top} \hat{\bt}_{h,r}^k+ \phi_h^k(\cdot,\cdot)^{\top} \hat{\bt}_{h,p}^k + ( {\Gamma}_{h,p}^k +  {\Gamma}_{h,r}^k)(\cdot,\cdot) \}\}$\;
    $ {V}_h^k(\cdot) \leftarrow \max_{a \in \mathcal{A}}  {Q}_h^k(\cdot,a) $
}
$\forall h, \pi_h^k(\cdot) \leftarrow \argmax_{a \in \mathcal{A}}  {Q}_h^k(\cdot,a) $\;
Roll out a  trajectory $(s_1^k, a_1^k, r_1^k,\ldots, s_{H+1}^k)$ by acting the policy $\pi^k = (\pi_h^k)_{h=1}^H$\;
Receive regularized statistics $ {\Lambda}_{h,p}^{k+1}$, $ {\Lambda}_{h,r}^{k+1}$, $ {u}_{h,p}^{k+1}$ and  $ {u}_{h,r}^{k+1}$ from the {\reg}
}
\end{algorithm}

\begin{theorem}
\label{thm:VI}
Let Assumption~\ref{def:lin} hold and the {\reg} in Algorithm~\ref{alg:UCB-VI} satisfy Assumption~\ref{ass:regularizer}. Then, for any $\alpha \in (0,1]$, with probability at least $1-4\alpha$, Algorithm~\ref{alg:UCB-VI} achieves regret 
\begin{align*}
    \mathcal{R}(T) &= O\left(\sqrt{TH^3}\left(\ln \frac{H}{\alpha} + d \kappa_{\lambda_{\min}}\right) + \sqrt{TH d \kappa_{\lambda_{\min}} } \left(\sqrt{d \lambda_{\max}} + \nu\right) \right)
\end{align*}
where $\kappa_{\lambda_{\min}}:=\ln\left(1+KH^2/(\lambda_{\min})\right)$ and $\lambda_{\min} = \min\{\lambda_{\min, p}, \lambda_{\min, r}\}$, $\lambda_{\max} = \max\{\lambda_{\max, p}, \lambda_{\max, r}\}$ and $\nu = \max\{\nu_{p}, \nu_{r}\} $
\end{theorem}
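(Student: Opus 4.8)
The plan is to run the standard optimistic value-iteration regret analysis for UCRL-VTR-type algorithms, while isolating every place where the changing regularizers enter and controlling them through Assumption~\ref{ass:regularizer}. Throughout I would condition on the high-probability event of Lemma~\ref{lem:CI}, on which $\bt_{h,p}\in\mathcal{C}_{h,p}^k$ and $\bt_{h,r}\in\mathcal{C}_{h,r}^k$ hold simultaneously for all $k,h$; intersected with the event of Assumption~\ref{ass:regularizer} and one Azuma event introduced below, this holds with probability at least $1-4\alpha$, matching the statement. The three building blocks are optimism, a Bellman-error regret decomposition, and a bonus-sum bound; only the last is genuinely affected by the changing regularizers.

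First I would establish \emph{optimism}: $V_h^k(s)\ge V_h^*(s)$ for all $s,h,k$, proved by backward induction on $h$. The inductive step hinges on the one-step error bound
\begin{align*}
\left| \vp(s,a)^{\top}\hat\bt_{h,r}^k - r_h(s,a) + \phi_h^k(s,a)^{\top}\hat\bt_{h,p}^k - [\mathbb{P}_h V_{h+1}^k](s,a)\right| \le (\Gamma_{h,p}^k + \Gamma_{h,r}^k)(s,a),
\end{align*}
which follows from Cauchy--Schwarz together with $\|\bt_{h,r}-\hat\bt_{h,r}^k\|_{\Lambda_{h,r}^k}\le\beta_r(\alpha)$ and $\|\bt_{h,p}-\hat\bt_{h,p}^k\|_{\Lambda_{h,p}^k}\le\beta_p(\alpha)$ from Lemma~\ref{lem:CI}, so the bonus exactly dominates the (two-sided) estimation error; the clipping $\max\{0,\min\{H-h+1,\cdot\}\}$ preserves the inequality since $V_h^*\in[0,H-h+1]$. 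Because Lemma~\ref{lem:CI} already folds the regularizer magnitudes into $\beta_p,\beta_r$, optimism goes through verbatim once the confidence bound is in hand.

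Next I would decompose the regret. By optimism $\mathcal{R}(T)\le\sum_{k=1}^K\big(V_1^k(s_1^k)-V_1^{\pi^k}(s_1^k)\big)$, and peeling the recursion step by step (adding and subtracting $[\mathbb{P}_h V_{h+1}^k](s_h^k,a_h^k)$, and using that $\pi^k$ is greedy for $Q_h^k$) yields
\begin{align*}
\mathcal{R}(T) \le 2\sum_{k=1}^K\sum_{h=1}^H (\Gamma_{h,p}^k + \Gamma_{h,r}^k)(s_h^k,a_h^k) + \sum_{k=1}^K\sum_{h=1}^H \xi_h^k,
\end{align*}
where $\xi_h^k := [\mathbb{P}_h(V_{h+1}^k - V_{h+1}^{\pi^k})](s_h^k,a_h^k) - (V_{h+1}^k - V_{h+1}^{\pi^k})(s_{h+1}^k)$ is a martingale difference sequence bounded by $2H$; Azuma--Hoeffding controls the second sum by $O(\sqrt{TH^2\ln(1/\alpha)})$, which is a lower-order contribution absorbed into the first term of the claimed bound.

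The crux --- and the step I expect to be the main obstacle --- is bounding $\sum_{k}(\Gamma_{h,p}^k + \Gamma_{h,r}^k)(s_h^k,a_h^k)$ under \emph{changing} regularizers, because the usual elliptical-potential argument relies on the clean recursion $\Lambda^{k+1}=\Lambda^k+\phi\phi^{\top}$, which fails when $Z_{h,p}^k$ varies with $k$. My remedy is to introduce the auxiliary fixed-floor matrix $\tilde\Lambda_{h,p}^k := \lambda_{\min,p} I + \sum_{j=1}^{k-1}\phi_h^k(s_h^j,a_h^j)\phi_h^k(s_h^j,a_h^j)^{\top}$, which \emph{does} satisfy the incremental recursion. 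Assumption~\ref{ass:regularizer} gives $Z_{h,p}^k\succeq\lambda_{\min,p}I$, hence $\Lambda_{h,p}^k\succeq\tilde\Lambda_{h,p}^k$ and $\|\phi_h^k\|_{(\Lambda_{h,p}^k)^{-1}}\le\|\phi_h^k\|_{(\tilde\Lambda_{h,p}^k)^{-1}}$, so the standard log-determinant (elliptical potential) lemma applies to $\tilde\Lambda_{h,p}^k$ with $\|\phi_h^k\|_2\le\sqrt{d_1}H$ from Assumption~\ref{def:lin}, giving $\sum_{k}\|\phi_h^k\|^2_{(\Lambda_{h,p}^k)^{-1}}=O(d\,\kappa_{\lambda_{\min}})$, and analogously for the reward term. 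A Cauchy--Schwarz step across episodes then yields $\sum_k\|\phi_h^k\|_{(\Lambda_{h,p}^k)^{-1}}=O(\sqrt{Kd\,\kappa_{\lambda_{\min}}})$. Finally I would substitute the explicit $\beta_p(\alpha),\beta_r(\alpha)$ from Lemma~\ref{lem:CI} --- whose $\sqrt{d\lambda_{\max}}$ and $\nu$ summands are precisely where the regularizer magnitude and the shift $z_{h,\cdot}^k$ enter --- sum over the $H$ steps, and collect terms using $T=KH$: the $\tfrac{H}{2}\sqrt{d\kappa}$ part of $\beta$ produces the $\sqrt{TH^3}(\ln(H/\alpha)+d\kappa_{\lambda_{\min}})$ term, while the $\sqrt{d\lambda_{\max}}+\nu$ part produces the $\sqrt{TH d\,\kappa_{\lambda_{\min}}}(\sqrt{d\lambda_{\max}}+\nu)$ term, recovering the stated bound.
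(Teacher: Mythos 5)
Your overall route coincides with the paper's (optimism by backward induction, peeling into bonus sums plus a martingale, Azuma–Hoeffding, then an elliptical-potential argument over fixed-floor matrices $G_{h,p}^k+\lambda_{\min,p}I$ --- the paper uses exactly this floor, via the last step of the proof of Lemma~\ref{lem:CI}), but there is a genuine gap at the crux step you yourself flagged. The elliptical potential lemma (Lemma 11 in~\citep{abbasi2011improved}) bounds the \emph{truncated} sum $\sum_{k}\min\bigl\{1,\norm{\phi_h^k(s_h^k,a_h^k)}_{(\wt{\Lambda}_{h,p}^k)^{-1}}^2\bigr\}$ by $2d_1\ln\bigl(1+KH^2/\lambda_{\min,p}\bigr)$; it controls the untruncated sum $\sum_k\norm{\phi_h^k(s_h^k,a_h^k)}_{(\wt{\Lambda}_{h,p}^k)^{-1}}^2$ only under the additional condition $\lambda_{\min,p}\geq\max\{1,\sup\norm{\phi}_2^2\}=\max\{1,d_1H^2\}$. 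Assumption~\ref{ass:regularizer} gives no such lower bound on $\lambda_{\min,p}$ (e.g., the non-private instantiation $Z_{h,p}^k=I$ has $\lambda_{\min,p}=1$ while $\norm{\phi_h^k}_2$ can be as large as $\sqrt{d_1}H$), and individual terms $\norm{\phi_h^k}_{(\Lambda_{h,p}^k)^{-1}}^2$ can be as large as $d_1H^2/\lambda_{\min,p}\gg 1$. Hence your claim $\sum_k\norm{\phi_h^k}_{(\Lambda_{h,p}^k)^{-1}}^2=O(d\,\kappa_{\lambda_{\min}})$ is false in general; after the Cauchy--Schwarz step it inflates the regret by a multiplicative factor of order $\sqrt{d H^2/\lambda_{\min,p}}$ (up to logarithms), so the stated bound is not recovered.

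The reason you cannot patch this downstream is that your decomposition $\mathcal{R}(T)\le 2\sum_{k,h}(\Gamma_{h,p}^k+\Gamma_{h,r}^k)(s_h^k,a_h^k)+\sum_{k,h}\xi_h^k$ discards the boundedness of the value functions too early. The paper keeps it inside the recursion: it writes $V_h^k(s_h^k)-V_h^{\pi^k}(s_h^k)\le\min\bigl\{H,\,2\Gamma_{h,p}^k+2\Gamma_{h,r}^k+[\mathbb{P}_h(V_{h+1}^k-V_{h+1}^{\pi^k})](s_h^k,a_h^k)\bigr\}$ (values lie in $[0,H]$), splits off the transition term using $V_{h+1}^k\geq V_{h+1}^{\pi^k}$ from optimism, and then uses $2\beta_p(\alpha)\ge H$ and $2H\beta_r(\alpha)\ge H$ (which hold by the explicit forms in Lemma~\ref{lem:CI}) to dominate the truncated bonus by $2\beta_p(\alpha)\min\bigl\{1,\norm{\phi_h^k(s_h^k,a_h^k)}_{(\Lambda_{h,p}^k)^{-1}}\bigr\}+2H\beta_r(\alpha)\min\bigl\{1,\norm{\vp(s_h^k,a_h^k)}_{(\Lambda_{h,r}^k)^{-1}}\bigr\}$. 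Only then does it apply Cauchy--Schwarz and the min-form of the elliptical potential lemma, which is valid for arbitrary $\lambda_{\min,p},\lambda_{\min,r}>0$ --- this is precisely what makes the theorem hold under the bare Assumption~\ref{ass:regularizer}. Everything else in your write-up (the optimism induction, the martingale bound, the $1-4\alpha$ probability accounting, the substitution of $\beta_p,\beta_r$ and collection of terms) matches the paper and is fine; restoring the $\min\{H,\cdot\}$ truncation in the decomposition repairs the proof.
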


\begin{proof}[Proof Sketch]
Let $\mathcal{E}(\alpha): = \{\theta_{h,p} \in \mathcal{C}_{h,p}^{k} \text{ and }  \theta_{h,r} \in \mathcal{C}_{h,r}^{k}, \forall h \in [H], k \in [K]\}$ denote the `good' event, which by Lemma~\ref{lem:CI}, holds with probability at least $1-3\alpha$. Under this good event, we can first show that $Q_{h}^k$ maintained in Algorithm~\ref{alg:UCB-VI} is optimistic with respect to the optimal $Q$-value $Q^*_h$. With this property and the linear mixture MDPs assumption, the regret can be decomposed as follows. 
\begin{align*}
    \sum_{k=1}^K V_{k,1}(s_1^k) - V_{1}^{\pi^k}(s_1^k) &\le \underbrace{\sum_{k=1}^K \sum_{h=1}^H2\beta_p(\alpha)\min\left\{1, \norm{\phi_h^k(s_h^k,a_h^k)}_{(\Lambda_{h,p}^k)^{-1}}\right\} }_{\mathcal{T}_1}\\
    &+ \underbrace{\sum_{k=1}^K \sum_{h=1}^H2H\beta_r(\alpha)\min\left\{1, \norm{\vp(s_h^k,a_h^k)}_{(\Lambda_{h,r}^k)^{-1}}\right\}}_{\mathcal{T}_2}\\
    &+ \underbrace{\sum_{k=1}^K\sum_{h=1}^H \mathbb{P}_h[V_{h+1}^k - V_h^{\pi^k}](s_h^k,a_h^k) - [V_{h+1}^k - V_h^{\pi^k}](s_{h+1}^k)}_{\mathcal{T}_3}.
\end{align*}
With a properly defined filtration, $\mathcal{T}_3$ is a sum of bounded martingale difference sequence. Therefore, by Azuma-Hoeffding inequality, we have with probability at least $1-\alpha$, $ \mathcal{T}_3 \le H\sqrt{2KH\log(1/\alpha)}$, where we have used boundedenss of value function in Algorithm~\ref{alg:UCB-VI}. To bound $\mathcal{T}_1$ and $\mathcal{T}_2$, we will use the trace-determinant lemma  and elliptical potential lemma (cf.~\citep[Lemmas 10 and 11]{abbasi2011improved}). In particular, by the boundedness of $\norm{\phi_h^k(s_h^k, a_h^k) } \le \sqrt{d_1} H$ and $\norm{\vp(s_h^k,a_h^k)} \le 1$, and the Assumption~\ref{ass:regularizer} on the {\reg}, we obtain that 
\begin{align*}
    &\mathcal{T}_1 \le 2\beta_p(\alpha)\sqrt{KH}\sqrt{2Hd_1\log(1+KH^2/(\lambda_{\min,p}))}\\
    &\mathcal{T}_2 \le 2H\beta_r(\alpha)\sqrt{KH}\sqrt{2Hd_2\log(1+K/(d_2\lambda_{\min,r}))}.
\end{align*}
Finally, plugging in the choices of $\beta_p(\alpha)$ and  $\beta_r(\alpha)$ in Lemma~\ref{lem:CI} and putting everything together, yields the required result. The full proof is given at Section~\ref{sec:proofthmvi}.
\end{proof}

\begin{remark}
This theorem serves as a building block for analyzing private value-based RL algorithm under linear mixture MDPs. In particular, the general procedure is to first instantiate the {\reg} with a specific privacy protection scheme, and then determine the corresponding values of $\lambda_{\max}, \lambda_{\min}, \nu$. Finally, with Theorem~\ref{thm:VI}, one can easily obtain the private regret. One key feature of this result is that it provides a general analytical framework for private value-based algorithms under different private schemes and even different types of privacy guarantees (e.g., JDP and LDP). See more discussions in Section~\ref{sec:priv}.
\end{remark}

\section{Optimistic Policy-based Algorithm with Changing Regularizers}
In this section, we turn to focus on policy-based algorithm under linear mixture MDPs. Our proposed algorithm can be regarded as a bandit-feedback variant of the OPPO algorithm in~\citep{cai2020provably} where the full information of the reward is observed and the regularizer is fixed. 
As in the last section, this  generalization serves as a basis for designing private policy-based algorithm in later section.

As shown in Algorithm~\ref{alg:UCB-PO}, our proposed \texttt{LioOpt-PO-Reg} basically consists of two stages at each episode $k$: \emph{policy evaluation} and \emph{policy improvement}. In the policy evaluation stage, it  uses the Least-Squares Temporal Difference (LSTD)~\citep{boyan1999least,bradtke1996linear} to evaluate policy $\pi^k$ based on previous $k-1$ historical trajectories. In contrast to standard process, \texttt{LinOpt-PO-Reg}  uses regualrized statistics produced by possible changing regularizers at this stage. In particular, it estimates $\mathbb{P}_h V_{h+1}^k$ with $(\phi_h^k)^{\top} \hat{\theta}_{h,p}^k$ and estimates $r_h$  with $\vp^{\top} \hat{\theta}_{h,r}^k$. These two estimates along with two UCB exploration bonuses are used to iteratively compute $Q$-function estimates. The $Q$-estimates are then truncated to the interval $[0,H]$ since the reward is within $[0,1]$. Then, the corresponding value estimates are computed by taking the expectation of $Q$-estimates with respect to the policy. Next, a new trajectory is rolled out by acting the policy $\pi^k$ and the {\reg} translates the collected statistics into regularized ones via the four changing regularizers. Finally, in the policy improvement stage, \texttt{LinOpt-PO-Reg} employs a `soft' update of the current policy $\pi^k$ by following a standard mirror-descent step together with a Kullback-Leibler (KL) divergence proximity term~\citep{cai2020provably}. 
The following theorem presents a general regret bound when the {\reg} in \texttt{LinOpt-PO-Reg} satisfies Assumption~\ref{ass:regularizer}.

\begin{theorem}
\label{thm:PO}
Let Assumption~\ref{def:lin} hold and the {\reg} in Algorithm~\ref{alg:UCB-PO} satisfy Assumption~\ref{ass:regularizer}. Then, for any $\alpha \in (0,1]$, with probability at least $1-4\alpha$, Algorithm~\ref{alg:UCB-PO} achieves regret
   \begin{align*}
    \mathcal{R}(T) = O\left(\sqrt{TH^3}\left(\log \frac{H}{\alpha} + d \kappa_{\lambda_{\min}}\right) +  \sqrt{TH^3 \log |\mathcal{A}|} + \sqrt{TH d \kappa_{\lambda_{\min}} } \left(\sqrt{d \lambda_{\max}} + \nu\right) \right),
\end{align*}
where $\kappa_{\lambda_{\min}}:=\ln\left(1+KH^2/(\lambda_{\min})\right)$ and $\lambda_{\min} = \min\{\lambda_{\min, p}, \lambda_{\min, r}\}$, $\lambda_{\max} = \max\{\lambda_{\max, p}, \lambda_{\max, r}\}$ and $\nu = \max\{\nu_{p}, \nu_{r}\}$. Further, suppose $\sqrt{\log |\mathcal{A}|} = O(\log \frac{H}{\alpha} + d \kappa_{\lambda_{\min}})$, the regret satisfies  
\begin{align*}
    \mathcal{R}(T) = O\left(\sqrt{TH^3}\left(\log \frac{H}{\alpha} + d \kappa_{\lambda_{\min}}\right) + \sqrt{TH d \kappa_{\lambda_{\min}} } \left(\sqrt{d \lambda_{\max}} + \nu\right) \right).
\end{align*}
\end{theorem}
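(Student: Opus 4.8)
The plan is to adapt the optimistic policy-optimization analysis of OPPO~\citep{cai2020provably} to the setting of changing regularizers, reusing the confidence bounds of Lemma~\ref{lem:CI} verbatim as in the proof of Theorem~\ref{thm:VI}. I would first condition on the good event $\mathcal{E}(\alpha) := \{\theta_{h,p} \in \mathcal{C}_{h,p}^{k} \text{ and } \theta_{h,r} \in \mathcal{C}_{h,r}^{k}, \forall h \in [H], k \in [K]\}$, which holds with probability at least $1-3\alpha$ by Lemma~\ref{lem:CI}. Introducing the model prediction error (Bellman residual) $\iota_h^k(s,a) := r_h(s,a) + [\mathbb{P}_h V_{h+1}^k](s,a) - Q_h^k(s,a)$, the first step is to show that under $\mathcal{E}(\alpha)$ the UCB bonuses sandwich this residual, namely $-2({\Gamma}_{h,p}^{k} + {\Gamma}_{h,r}^{k})(s,a) \le \iota_h^k(s,a) \le 0$, using the linear-mixture structure of Assumption~\ref{def:lin} together with the ellipsoid membership of $\theta_{h,p}$ and $\theta_{h,r}$.

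The second step is the regret decomposition. Applying the performance-difference lemma and inserting the estimated $Q$-functions, I would write $\sum_{k=1}^K [V_1^{*}(s_1^k) - V_1^{\pi^k}(s_1^k)]$ as the sum of (i) a \emph{policy-improvement} term $\sum_{k}\sum_{h} \mathbb{E}_{\pi^{*}}[\langle Q_h^k(s_h,\cdot),\, \pi_h^{*}(\cdot|s_h) - \pi_h^k(\cdot|s_h)\rangle]$, (ii) a \emph{model-prediction-error} term built from $\iota_h^k$, and (iii) two martingale-difference sequences arising from replacing conditional expectations by their sampled realizations. Term (i) is controlled by the standard KL-regularized online-mirror-descent regret bound: with the step size tuned as in~\citep{cai2020provably} (of order $\sqrt{\log|\mathcal{A}|/(KH^2)}$), the per-layer mirror-descent guarantee is $O(H\sqrt{K\log|\mathcal{A}|})$, so summing over the $H$ layers gives $O(H^2\sqrt{K\log|\mathcal{A}|}) = O(\sqrt{TH^3\log|\mathcal{A}|})$. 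This is precisely the new term absent from Theorem~\ref{thm:VI}, and it is the only place the action-space size enters.

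For term (ii), the sandwich bound yields $\mathbb{E}_{\pi^{*}}[\iota_h^k] \le 0$ (optimism), so only $-\iota_h^k(s_h^k,a_h^k) \le 2({\Gamma}_{h,p}^{k} + {\Gamma}_{h,r}^{k})(s_h^k,a_h^k)$ survives; bounding $\sum_{k,h}({\Gamma}_{h,p}^{k} + {\Gamma}_{h,r}^{k})$ is then identical to bounding $\mathcal{T}_1 + \mathcal{T}_2$ in the proof of Theorem~\ref{thm:VI}, i.e. via $\norm{\phi_h^k(s_h^k,a_h^k)} \le \sqrt{d_1}H$, $\norm{\vp(s_h^k,a_h^k)} \le 1$, the regularity Assumption~\ref{ass:regularizer}, and the trace--determinant and elliptical-potential lemmas~\citep[Lemmas 10--11]{abbasi2011improved}. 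After plugging in $\beta_p(\alpha)$ and $\beta_r(\alpha)$ from Lemma~\ref{lem:CI}, this reproduces the $\sqrt{TH d \kappa_{\lambda_{\min}}}(\sqrt{d\lambda_{\max}} + \nu)$ term together with the $\sqrt{TH^3}\, d\kappa_{\lambda_{\min}}$ contribution. The martingale term (iii) is handled exactly as $\mathcal{T}_3$: with the value functions truncated to $[0,H]$, Azuma--Hoeffding gives an $O(\sqrt{TH^3\log(1/\alpha)})$ bound with probability at least $1-\alpha$, which, combined with $\mathcal{E}(\alpha)$, yields the overall $1-4\alpha$ confidence. Collecting (i)--(iii) gives the stated bound, and the simplified form follows immediately once $\sqrt{\log|\mathcal{A}|} = O(\log\frac{H}{\alpha} + d\kappa_{\lambda_{\min}})$ absorbs the $\sqrt{TH^3\log|\mathcal{A}|}$ term.

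The main obstacle I anticipate is not the confidence analysis --- that is inherited from Lemma~\ref{lem:CI} and mirrors Theorem~\ref{thm:VI} --- but rather verifying that the mirror-descent regret bound of~\citep{cai2020provably} goes through unchanged under \emph{bandit} reward feedback and \emph{changing} regularizers. Specifically, I must check that the policy-improvement decomposition is insensitive to how the estimators $\hat{\theta}_{h,p}^k, \hat{\theta}_{h,r}^k$ are regularized (since the regularizers enter only through $Q_h^k$, which the mirror-descent step treats as a fixed given vector at each episode), and that truncating the $Q$-estimates to $[0,H]$ does not break the telescoping KL argument. Once these structural points are confirmed, the remaining calculations are routine.
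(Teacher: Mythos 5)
Your proposal is correct and follows essentially the same route as the paper's own proof: the same good event $\mathcal{E}(\alpha)$ from Lemma~\ref{lem:CI}, the same OPPO-style decomposition into a mirror-descent term (bounded by $\sqrt{2H^3T\log|\mathcal{A}|}$ with $\eta=\sqrt{2\log|\mathcal{A}|/(HT)}$), a martingale term handled by Azuma--Hoeffding, and a model-prediction-error term controlled via the sandwich bound $2(\Gamma_{h,p}^k+\Gamma_{h,r}^k)(s,a)\ge-\iota_h^k(s,a)\ge 0$ and the elliptical-potential argument inherited from Theorem~\ref{thm:VI}. The structural points you flag as potential obstacles are exactly the ones the paper resolves in the same way you anticipate (the regularizers enter only through $Q_h^k$, and truncation is removed under $\mathcal{E}(\alpha)$ before the residual bounds are applied).
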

\begin{proof}[Proof Sketch]
As in~\citep{cai2020provably}, our regret analysis starts with the following regret decomposition.
\begin{align*}
\mathcal{R}(T)  &= \sum_{k=1}^K\left(V_1^*(s_1^k) - V_1^{\pi_k} (s_1^k)\right)\\
&= \underbrace{\sum_{k=1}^K \sum_{h=1}^H \mathbb{E}_{\pi^*}\left[ \inner{Q_h^k(s_h^k)}{\pi_h^*(\cdot | s_h)- \pi_h^{\pi_k}(\cdot | s_h)} \mid s_1 = s_1^k\right] }_{(i)} + \underbrace{\mathcal{M}_{K,H,2}}_{(ii)}\\
&+ \underbrace{\sum_{k=1}^K \sum_{h=1}^H \left(\mathbb{E}_{\pi^*}[\iota_h^k(x_h,a_h) \mid s_1 = s_1^k] - \iota_h^k(s_h^k, a_h^k)\right) }_{(iii)},
\end{align*}
where $\iota_h^k:=r_h + \mathbb{P}_h V_{h+1}^k - Q_h^k$ and $\mathcal{M}_{K,H,2}$ is a martingale adapted to a properly defined filtration. Hence, by the boundedness of both $Q$-function and value function and Azuma-Hoeffding inequality, we can establish that with probability at least $1-\alpha$, $(ii) \le 4H \sqrt{2T\log(1/\alpha)}$. For $(i)$, we can bound it by using standard properties of KL-divergence and mirror-descent update along with the fact that $\pi^1$ is a uniform distribution. In particular, we have $(i) \le \sqrt{2H^3T\log |\mathcal{A}|}$. The key part in the proof is to bound $(iii)$. To this end, we will first show that under good event $\mathcal{E}(\alpha)$, we have for all $s,a,k,h$, 
\begin{align}
    2 ( {\Gamma}_{h,p}^k +  {\Gamma}_{h,r}^k)(s,a) \ge -\iota_h^k(s,a) \ge 0.
\end{align}
Therefore, with the above property, we can see that bounding $(iii)$ reduces to bounding the summation of both bonus terms. Similar to the analysis in the last section, we have 
\begin{align*}
    (iii) = O\left(\sqrt{TH^3}\left(\log \frac{H}{\alpha} + d \kappa_{\lambda_{\min}}\right) + \sqrt{TH d \kappa_{\lambda_{\min}} } \left(\sqrt{d \lambda_{\max}} + \nu\right) \right),
\end{align*}
where $\kappa_{\lambda_{\min}}:=\log\left(1+KH^2/(\lambda_{\min})\right)$ and $\lambda_{\min} = \min\{\lambda_{\min, p}, \lambda_{\min, r}\}$, $\lambda_{\max} = \max\{\lambda_{\max, p}, \lambda_{\max, r}\}$ and $\nu = \max\{\nu_{p}, \nu_{r}\} $. Finally, putting the three terms together, yields the required result. The full proof is given at Section~\ref{sec:proofthmpo}.
\end{proof}
\begin{remark}
This theorem provides a clean result to design the first private PO algorithms with linear function approximations in the next section. As in the last section, this result offers a simple approach to designing private PO algorithms under different privacy guarantees, which will be explored in the next section.
\end{remark}

\begin{algorithm}[t!]
\caption{ \texttt{LinOpt-PO-Reg}
}
\label{alg:UCB-PO}
\DontPrintSemicolon
\KwIn{Number of episodes $K$, time horizon $H$, parameter $\eta$, a {\reg}, confidence level $\alpha \in (0,1]$}
Initialize regularized statistics $ {\Lambda}_{h,p}^{1}=0$, $ {\Lambda}_{h,r}^{1}=0$, $ {u}_{h,p}^{1}=0$ and  $ {u}_{h,r}^{1}=0$  \;
Initialize policy $\pi^1 = (\pi_h^1)_{h=1}^H$ be uniform distributions over $\mathcal{A}$\;
\For{$k=1,\ldots, K$}{
Initialize value estimates: $ {V}^k_{H+1}(s) = 0$ $\forall s$  \;
\For{$h=H,H-1,\ldots, 1$}{
    $ \hat{\bt}_{h,p}^k \leftarrow ( {\Lambda}_{h,p}^k)^{-1}  {u}_{h,p}^k$\;  
    $ \hat{\bt}_{h,r}^k \leftarrow ( {\Lambda}_{h,r}^k)^{-1}  {u}_{h,r}^k$\;
    ${\phi}_h^k \leftarrow \int_{s'} \bp(s' | s,a) {V}_{h+1}^k(s')\, ds$ \;
    $ {\Gamma}_{h,p}^{k}(\cdot, \cdot) \leftarrow \beta_p [\phi_h^k(\cdot,\cdot)^{\top} ( {\Lambda}_{h,p}^k)^{-1} \phi_h^k(\cdot,\cdot) ]^{1/2}$ \;
    $ {\Gamma}_{h,r}^{k}(\cdot, \cdot)\leftarrow \beta_r [\varphi(\cdot,\cdot)^{\top} ( {\Lambda}_{h,r}^k)^{-1} \varphi(\cdot,\cdot) ]^{1/2}$ \;
    $ {Q}_h^k(\cdot, \cdot) \leftarrow \max\{0, \min\{H-h+1, \vp(\cdot,\cdot)^{\top} \hat{\bt}_{h,r}^k+ \phi_h^k(\cdot,\cdot)^{\top} \hat{\bt}_{h,p}^k + ( {\Gamma}_{h,p}^k +  {\Gamma}_{h,r}^k)(\cdot,\cdot) \}\}$\;
    $ {V}_h^k(\cdot) \leftarrow \inner{ {Q}_h^k(\cdot,\cdot)}{\pi_h^k(\cdot | \cdot)}_{\mathcal{A}} $
}
Roll out a  trajectory $(s_1^k, a_1^k, r_1^k,\ldots, s_{H+1}^k)$ by acting the policy $\pi^k = (\pi_h^k)_{h=1}^H$\;
Receive regularized statistics $ {\Lambda}_{h,p}^{k+1}$, $ {\Lambda}_{h,r}^{k+1}$, $ {u}_{h,p}^{k+1}$ and  $ {u}_{h,r}^{k+1}$ from the {\reg}\;
Update policy $\forall h$ , $\pi_h^{k+1}(\cdot|\cdot) \propto {\pi}_h^k(\cdot|\cdot)  \exp \{\eta  {Q}_h^k(\cdot, \cdot)\}$

}
\end{algorithm}

\section{Privacy and Regret Guarantees}
\label{sec:priv}
In this section, we shall see how to instantiate the {\reg} in previous two algorithms with a specific privacy protection scheme--called {\priv} hereafter--to guarantee JDP while achieving sublinear regret.

\subsection{{\priv} Design}
The main idea behind the {\priv} is to inject Gaussian noise as the changing regularizers, i.e., $Z_{h,p}^k$ and $Z_{h,r}^k$ are chosen to be Gaussian random matrices  and $z_{h,p}^k$ and $z_{h,r}^k$ are Gaussian random vectors so that the standard Gaussian mechanism~\citep{dwork2014algorithmic} can be applied. These noise are used to privatize the non-private statistics $G_{h,p}^k:=\sum_{j=1}^{k-1} \phi_h^k(s_h^j,a_h^j)\phi_h^k(s_h^j,a_h^j)^{\top}$, $G_{h,r}^k:=\sum_{j=1}^{k-1} \vp(s_h^j,a_h^j)\vp(s_h^j,a_h^j)^{\top}$, $g_{h,p}^k:=\sum_{j=1}^{k-1} \phi_h^k(s_h^j,a_h^j) V_{h+1}^j(s_{h+1}^j)$ and $g_{h,r}^k:=\sum_{j=1}^{k-1} \vp(s_h^j,a_h^j) r_h^j(s_h^j,a_h^j)$, all of which are in the form of sums. This motivates us to consider the \emph{binary counting mechanism}~\citep{chan2011private}, which is a tree-based utility-efficient scheme for releasing private counts. In the following, let's take the design of  $Z_{h,p}^k$ and $z_{h,p}^k$ as an example and the same method applies to  $Z_{h,r}^k$ and $z_{h,r}^k$. We first initialize $2H$ counters, $\mathcal{B}_{h,p,1}$ and $\mathcal{B}_{h,p,2}$ for each $h \in [H]$. In particular, $\mathcal{B}_{h,p,1}$ maintains a binary interval tree in which each node represents a partial sum (P-sum) in the form of $\Sigma_{p,1}[i,j] = \sum_{k=i}^j \phi_h^k(s_h^k,a_h^k)\phi_h^k(s_h^k,a_h^k)^{\top}$ involving the features in episodes $i$ through $j$. Each leaf node represents a P-sum with a single episode (i.e., $i=j$ and hence the tree has $k-1$ leaf nodes at the start of episode $k$), and each interior node represents the range of episodes covered by its children. The same construction applies to $\mathcal{B}_{h,p,2}$ with the P-sum given by $\Sigma_{p,2}[i,j] = \sum_{k=i}^{j} \phi_h^k(s_h^k,a_h^k) V_{h+1}^k(s_{h+1}^k)$. Now, to release the private version of $G_{h,p}^k$ for each $k$ denoted by $\hat{G}_{h,p}^k$, $\mathcal{B}_{h,p,1}$ first release a noisy P-sum $\hat{\Sigma}_{p,1}[i,j]$ corresponding to each node in the tree. Here $\hat{\Sigma}_{p,1}[i,j]$ is obtained by perturbing both $(p,q)$-th and $(q,p)$-th, $1\le p\le q\le d_1$, entries of $\Sigma_{p,1}[i,j]$ with i.i.d Gaussian noise $\mathcal{N}(0,\sigma_{p,1}^2)\footnote{This will ensure symmetric of the P-sums even after adding noise.}$. Then the private $\hat{G}_{h,p}^k$ is computed by summing up the noisy P-sums that uniquely cover the range $[1,k-1]$. {Algorithmically speaking, $\mathcal{B}_{h,p,1}$ is constructed and maintained by simply replacing the input stream of 0/1 bit in Algorithm 2 of~\citep{chan2011private}  with the stream of $\{\phi_h^k(s_h^k,a_h^k)\phi_h^k(s_h^k,a_h^k)^{\top}\}_{k=1}^K$ in our case.}
To guarantee the resultant matrix is still positive definite, $\mathcal{B}_{h,p,1}$ shifts  $\hat{G}_{h,p}^k$ by $\gamma I$ for some $\gamma >0$ specified later. Thus, the final matrix $\Lambda_{h,p}^k$ is the sum of non-private statistics $G_{h,p}^k$ and noise matrix $Z_{h,p}^k$,  where $Z_{h,p}^k = N_{h,p}^k + \gamma I$ and $N_{h,p}^k$ is the total noise added via the summation of noisy P-sums. Similarly, for the counter $\mathcal{B}_{h,p,2}$, the noisy P-sum $\hat{\Sigma}_{p,2}[i,j]$ is generated by perturbing each element of the  $d_1\times 1$ vector with i.i.d Gaussian noise $\mathcal{N}(0,\sigma_{p,2}^2)$. Then, the private $\hat{g}_{h,p}^k$ is computed by summing up the noisy P-sums that uniquely cover the range $[1,k-1]$. As a result, the final vector $u_{h,p}^k$ is the sum of $g_{h,p}^k$ and $z_{h,p}^k$, where $z_{h,p}^k$ is the total noise added in the covering noisy P-sums. We follow the same way to maintain the counters $\mathcal{B}_{h,r,1}$ and $\mathcal{B}_{h,r,2}$ to privatize $G_{h,r}^k$ and $g_{h,r}^k$ and hence obtaining $Z_{h,r}^k$ and $z_{h,r}^k$. Note that, at the end of each episode, the counters only need to store noisy P-sums required for computing private statistics at future episodes, and hence can safely discard P-sums that are no longer needed.

One key problem in differential privacy is to balance between privacy guarantee and utility (i.e., regret in our case). That is, if more noise is added during the process, it becomes easier to guarantee privacy but meanwhile suffers from a worse regret. The distinguishing feature of binary counting mechanism discussed above is the ability to maintain a nice trade-off between privacy and utility, which is demonstrated via the following fundamental properties~\citep{chan2011private}. 
\begin{lemma}[Fundamental properties of binary counting mechanism] 
\label{lem:fact}
Let $m:= \lceil\log K\rceil $. For all the counters $\mathcal{B}_{h,p,1}$, $\mathcal{B}_{h,p,2}$, $\mathcal{B}_{h,r,1}$ and $\mathcal{B}_{h,r,2}$ introduced above, we have 
\begin{enumerate}
    \item[(i)] Each episode appears only in at most $m$ P-sums.
    \item[(ii)] There are at most $m$ P-sums that uniquely cover the range $[1,k-1],~\forall k \in [K]$ .
\end{enumerate}
\end{lemma}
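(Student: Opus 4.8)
The plan is to reduce both claims to elementary facts about the dyadic (binary) interval tree underlying the counting mechanism, exactly as in~\citep{chan2011private}. First I would fix notation: the counter over $K$ episodes maintains a complete binary tree whose leaves are the singleton ranges $[k,k]$ for $k \in [K]$, and whose internal nodes correspond to dyadic intervals --- each P-sum $\Sigma[i,j]$ is attached to the node covering the contiguous block of episodes $[i,j]$. Since the tree has (at most) $K$ leaves, its height is at most $m = \lceil \log K \rceil$, and the P-sums at level $\ell$ (counting leaves as level $0$) are exactly the disjoint dyadic blocks of length $2^\ell$ partitioning $[1,K]$. This structural description is identical for all four counters $\mathcal{B}_{h,p,1}, \mathcal{B}_{h,p,2}, \mathcal{B}_{h,r,1}, \mathcal{B}_{h,r,2}$, since they differ only in the summand being aggregated into each node, so it suffices to argue once for a generic counter and invoke $K \le 2^m$ to control the number of levels.

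For part (i), I would observe that a fixed episode $j$ is contained in exactly one dyadic block at each level, namely the unique block of length $2^\ell$ containing $j$; these are precisely the nodes on the root-to-leaf path terminating at the leaf $[j,j]$. Hence the number of P-sums that contain episode $j$ equals the number of levels on this path, which is at most $m$. For part (ii), I would use the binary expansion of $k-1$: writing $k-1 = \sum_{\ell} b_\ell\, 2^\ell$ with $b_\ell \in \{0,1\}$, each nonzero bit $b_\ell = 1$ contributes exactly one dyadic block of length $2^\ell$, and these blocks tile $[1,k-1]$ greedily from largest to smallest. Consequently the number of P-sums uniquely covering $[1,k-1]$ equals the number of ones in the binary representation of $k-1$, which is at most the number of bits, i.e. at most $\lceil \log K \rceil = m$ since $k - 1 \le K - 1 < 2^m$.

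The argument is essentially bookkeeping over the tree, so there is no genuine analytic obstacle; the only point requiring care is the convention relating tree height to the number of levels and the induced off-by-one in the definition of $m$, which I would reconcile uniformly by appealing to $K \le 2^m$ rather than tracking exact path lengths. I would then conclude by noting that this decomposition into at most $m$ P-sums per released count, together with the fact that each episode perturbs at most $m$ P-sums, is precisely what later lets us bound both the aggregate injected noise (hence the regularity constants $\lambda_{\max}$, $\lambda_{\min}$, $\nu$ in Assumption~\ref{ass:regularizer}) and the privacy loss through composition.
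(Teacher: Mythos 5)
Your argument is correct in substance, and there is nothing to compare it against inside the paper: the paper never proves this lemma, it imports both facts directly from the binary counting mechanism of~\citep{chan2011private}, and your dyadic-tree bookkeeping is exactly the standard argument underlying that citation. Part (ii) of your proof is clean: the prefix $[1,k-1]$ is tiled by one dyadic block per set bit in the binary expansion of $k-1$, and $k-1 \le K-1 < 2^{m}$ gives at most $m$ blocks. In part (i), however, your count has an off-by-one that your proposed fix does not actually repair: an episode lies in exactly one block per level, and a tree over $2^{m}$ leaves has $m+1$ levels (leaf through root), so ``appealing to $K \le 2^{m}$'' still yields $m+1$ P-sums --- concretely, episode $1$ with $K=2^{m}$ lies in $[1,1],[1,2],[1,4],\dots,[1,2^{m}]$. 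The stated bound of $m$ is rescued only by an operational convention: a node's P-sum is created when its range completes and is kept only if needed for some future prefix $[1,k-1]$ with $k\le K$; since prefixes stop at $K-1$, the full-range node $[1,2^{m}]$ is never used (and is never even completed unless $K$ is an exact power of two). This is a boundary nitpick rather than a fatal gap --- the paper's own statement inherits the same imprecision, and every downstream use (noise variance scaling with $m$ in Theorem~\ref{thm:priv_regret}, privacy composition over $m$ P-sums in Theorem~\ref{thm:priv}) only requires $O(\log K)$ --- but if you want the lemma exactly as stated, you should argue why the root-level P-sum never enters the count rather than gesturing at $K \le 2^{m}$.
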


In the following, we will rely on the two properties in Lemma~\ref{lem:fact} to analyze the privacy and regret guarantees, respectively. Before that, we first formally define two private RL algorithms by instantiating the {\reg} in Algorithm~\ref{alg:UCB-VI} and~\ref{alg:UCB-PO} with the {\priv} that consists of the four counters $\mathcal{B}_{h,p,1}$, $\mathcal{B}_{h,p,2}$, $\mathcal{B}_{h,r,1}$ and $\mathcal{B}_{h,r,2}$ introduced above.

\begin{definition}[\texttt{Private-LinOpt-VI}]
\texttt{Private-LinOpt-VI} is defined as an instance of  Algorithm~\ref{alg:UCB-VI} with the {\reg} being instantiated by a {\priv} that consists of the four private counters $\mathcal{B}_{h,p,1}$, $\mathcal{B}_{h,p,2}$, $\mathcal{B}_{h,r,1}$ and $\mathcal{B}_{h,r,2}$.
\end{definition}

\begin{definition}[\texttt{Private-LinOpt-PO}]
\texttt{Private-LinOpt-PO} is defined as an instance of Algorithm~\ref{alg:UCB-PO} with the {\reg} being instantiated by a {\priv} that consists of the four private counters $\mathcal{B}_{h,p,1}$, $\mathcal{B}_{h,p,2}$, $\mathcal{B}_{h,r,1}$ and $\mathcal{B}_{h,r,2}$.
\end{definition}

\subsection{Privacy Guarantee}
In this section, we will show that both private algorithms provide JDP guarantee, which is summarized in the following theorem.

\begin{theorem}
\label{thm:priv}
Let Assumption~\ref{def:lin} hold. For any $\epsilon >0$ and $\delta \in (0,1]$, both \texttt{Private-LinOpt-VI} and \texttt{Private-LinOpt-PO} can guarantee $O(\epsilon,\delta)$-JDP.
\end{theorem}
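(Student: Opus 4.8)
The plan is to prove JDP via the standard two-step recipe: first establish that the noise injected by the {\priv} makes the sequence of released private statistics satisfy $(\epsilon,\delta)$-differential privacy, and then invoke the \emph{billboard lemma} to upgrade this to $(\epsilon,\delta)$-JDP for the actions. The key structural observation is that in Algorithm~\ref{alg:UCB-VI} and Algorithm~\ref{alg:UCB-PO}, the action recommended to user $k$ depends on the private history \emph{only} through the regularized statistics $\{\Lambda_{h,p}^k, \Lambda_{h,r}^k, u_{h,p}^k, u_{h,r}^k\}_{h\in[H]}$ received from the {\priv}. Since these are computed solely from the noisy P-sums of the four counters (plus the public action $a_h^k$ drawn from the resulting policy and the known initial state), the map from the private statistics to the actions is a post-processing that does not re-access any other user's raw data. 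This is precisely the setting of the billboard lemma, so it suffices to show the private statistics themselves are $(\epsilon,\delta)$-DP.

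For the differential-privacy-of-statistics step, I would reason counter by counter. Changing a single user $k$ changes exactly one term in each underlying stream (the per-episode contributions $\phi_h^k(\phi_h^k)^\top$, $\phi_h^k V_{h+1}^k$, $\varphi\varphi^\top$, $\varphi r_h^k$). By property (i) of Lemma~\ref{lem:fact}, that one episode enters at most $m=\lceil\log K\rceil$ P-sums in each tree, so at most $m$ noisy P-sums per counter are affected. First I would bound the per-P-sum $\ell_2$ (Frobenius) sensitivity using Assumption~\ref{def:lin}: $\norm{\phi_h^k}_2 \le \sqrt{d_1}H$ gives sensitivity $O(d_1 H^2)$ for the matrix counter $\mathcal{B}_{h,p,1}$ and $O(\sqrt{d_1}H^2)$ for the vector counter $\mathcal{B}_{h,p,2}$ (using $V_{h+1}^k \le H$), with analogous $O(1)$-type bounds for the reward counters since $\norm{\varphi}\le 1$ and $r_h^k\in[0,1]$. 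Then each noisy P-sum is a Gaussian mechanism, and releasing the whole tree is an adaptive composition of $m$ such mechanisms per counter, across $H$ steps and the $2$ P-sum-types $\times\,2$ (transition/reward), i.e. $O(Hm)$ Gaussian releases total. I would calibrate the variances $\sigma_{p,1}^2,\sigma_{p,2}^2,\sigma_{r,1}^2,\sigma_{r,2}^2$ so that each single release is $(\epsilon_0,\delta_0)$-DP via the Gaussian mechanism guarantee, and then use advanced composition (or the zCDP/analytic Gaussian composition route) over the $O(Hm)=O(H\log K)$ releases to obtain an overall $(\epsilon,\delta)$ bound; solving for $\sigma$ in terms of the target $\epsilon,\delta$ is what ultimately pins down the scaling and feeds the $\nu,\lambda_{\max},\lambda_{\min}$ constants used in Theorems~\ref{thm:VI} and~\ref{thm:PO}.

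The main obstacle I anticipate is handling the adaptivity and the data-dependence of the matrices $\phi_h^k$ cleanly. Unlike a plain bit-counting stream, the feature $\phi_h^k(\cdot,\cdot)=\int_{s'}\psi(s'|\cdot,\cdot)V_{h+1}^k(s')\,ds'$ depends on the value function $V_{h+1}^k$, which is itself computed from earlier private statistics; so the stream fed into each counter is adaptively chosen. I would address this by noting that the counters privatize the \emph{raw} per-episode contributions, and that the adaptivity is mediated entirely through already-privatized quantities, so a standard adaptive-composition argument (treating the analyst's choice of next query as arbitrary) still applies — the sensitivity bound holds pointwise for every realization because it comes from the uniform norm bounds in Assumption~\ref{def:lin}, not from any distributional assumption. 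A secondary subtlety is the symmetric perturbation of the matrix counters (only perturbing the upper triangle and mirroring): I would verify that this changes the effective dimension of the Gaussian mechanism from $d_1^2$ to $\binom{d_1+1}{2}$ entries and adjust the sensitivity accordingly, but it does not otherwise affect the composition structure. Once the per-release privacy budget and composition are set, assembling the final $(\epsilon,\delta)$-JDP claim through the billboard lemma is routine.
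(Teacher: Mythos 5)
Your proposal follows essentially the same route as the paper's proof: bound the per-P-sum $L_2$-sensitivity via Assumption~\ref{def:lin}, use the Gaussian mechanism together with property (i) of Lemma~\ref{lem:fact} and (z)CDP-style composition to make each of the $4H$ counters differentially private, and then invoke the billboard lemma to convert the DP-computed policy sequence into a JDP guarantee on the recommended actions. The only differences are organizational rather than substantive — you compose jointly over all $O(Hm)$ affected releases where the paper composes over $m$ within each counter, over the four counter types, and then rescales $(\epsilon,\delta)$ by $H$ — and you explicitly flag the adaptivity of the feature stream and the symmetric-perturbation dimension count, which the paper handles implicitly.
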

\begin{proof}
We first show that with proper choices of noise parameter in the four counters (i.e., $\sigma_{p,1}, \sigma_{p,2}, \sigma_{r,1}, \sigma_{r,2}$), all the four counters are $(\epsilon/4, \delta/4)$-DP. We begin with the counter $\mathcal{B}_{h,p,1}$. To this end, we need to determine a global upper bound $\Delta_{p,1}$ over the $L_2$-sensitivity so as to apply Gaussian mechanism~\citep{dwork2014algorithmic}. Here, $\Delta_{p,1}$ encodes the maximum change in the Frobenious norm of each P-sum if the state-action pair of a single episode is changed. By the boundedness assumptions in Assumption~\ref{def:lin}, we have $\norm{\phi_h^k(s,a)} \le \sqrt{d_1}H, \forall s,a,h,k$, and hence $\Delta_{p,1} = d_1 H^2 \le dH^2$. Since the noisy P-sums are obtained via Gaussian noise, we have that each $\hat{\Sigma}_{p,1}[i,j]$ is $(\Delta_{p,1}^2/2\sigma_{p,1}^2)$-CDP~\citep[Proposition 1.6]{bun2016concentrated}. Now, by the property (i) in Lemma~\ref{lem:fact} and the composition property of DP, the whole counter $\mathcal{B}_{h,p,1}$ is $(m\Delta_{p,1}^2/2\sigma_{p,1}^2)$-CDP, and thus, in turn, $\left(\frac{m\Delta_{p,1}^2}{2\sigma_{p,1}^2}\!+\!2\sqrt{\frac{m\Delta_{p,1}^2}{2\sigma_{p,1}^2}\ln\left(\frac{4}{\delta}\right)},\frac{\delta}{4}\right)$-DP for any $\delta \!>\! 0$~\citep[Lemma 3.5]{bun2016concentrated}. Now, setting $\sigma_{p,1}^2 = 32m\Delta_{p,1}^2 \ln(4/\delta)/\epsilon^2$, we can ensure that $\mathcal{B}_{h,p,1}$ is $O(\epsilon/4, \delta/4)$-DP. We then follow the same approach to establish that $\mathcal{B}_{h,p,2}$ is $O(\epsilon/4, \delta/4)$-DP with $\sigma_{p,2}^2 = 32m\Delta_{p,2}^2 \ln(4/\delta)/\epsilon^2$ and $\Delta_{p,2} = \sqrt{d_1}H^2 \le \sqrt{d}H^2$. Similarly, we have $\mathcal{B}_{h,r,1}$ and $\mathcal{B}_{h,r,2}$ are both $O(\epsilon/4, \delta/4)$-DP with $\sigma_{r,1}^2 = \sigma_{r,2}^2 = 32m\Delta^2 \ln(4/\delta)/\epsilon^2$ with  $\Delta = \Delta_{r,1} = \Delta_{r,2}  =  1$. 

To prove the final result, we now use the \emph{billboard lemma}~\citep[Lemma 9]{hsu2016private}, which states that an algorithm is JDP under  continual observation if the output sent to each user is a function of the user's private data and a common quantity computed using standard differential privacy. Note that at each step $h$ of any episode $k$, in both algorithms, the estimators (e.g., $\hat{\theta}_{h,p}^k$) and bonus terms (e.g., $\Gamma_{h,p}^k$) are computed via the four counters, each with $O(\epsilon/4,\delta/4)$-DP guarantee. By composition and post-processing properties of DP, we can conclude that the sequences of policies $\{\pi^k\}_{k \in [K]}$ are computed using an $O(\epsilon,\delta)$-DP mechanism. Now, the actions $\{a_{h}^k\}_{h\in H}$ during episode $k$ are generated via the policy $\pi^k$ and the user's data $s_{h}^k$ as $a_{h}^k = \pi^k(s_{h}^k)$. Then, by the billboard lemma, the composition of the controls $\{a_h^k\}_{k\in K, h\in H}$ sent to all the users is $O(H\epsilon,H\delta)$-JDP. Hence, replacing $\epsilon$ and $\delta$ by $\epsilon/H$ and $\delta/H$, yields the final result.
\end{proof}

\subsection{Regret Guarantee}
In this section, we will utilize the general regret bounds for \texttt{LinOpt-VI-Reg} and \texttt{LinOpt-PO-Reg}  to derive the private regret bounds. In particular, thanks to the results in Theorems~\ref{thm:VI} and~\ref{thm:PO}, we only need to determine the three constants $\lambda_{\max}, \lambda_{\min}$ and $\nu$ under the {\priv} ({equivalently the {\reg}} in Algorithms~\ref{alg:UCB-VI} and~\ref{alg:UCB-PO}) such that Assumption~\ref{ass:regularizer} holds. To this end, we will use concentration results for random Gaussian matrix (Lemma~\ref{lem:norm_sym_gaussian}) and chi-square random variable (Lemma~\ref{lem:chi-square}) to arrive at the following theorem. 
\begin{theorem}
\label{thm:priv_regret}
Let Assumption~\ref{def:lin} hold. For any privacy parameters $\epsilon > 0$ and $\delta \in (0,1]$, and for any $\alpha \in (0,1]$, with probability at least $1-4\alpha$, \texttt{Private-LinOpt-VI} enjoys the regret bound 
\begin{align}
\label{eq:regret_VI}
    \mathcal{R}(T) = O\left( d\sqrt{TH^3} \log(T/\alpha) + \frac{\log(H/\delta)^{1/4}}{\epsilon^{1/2}} d^{7/4}\sqrt{TH^4} \log T \log(T/\alpha)^{1/4} \right).
\end{align}
Similarly, for any $\alpha \in (0,1]$, with probability at least $1-4\alpha$, \texttt{Private-LinOpt-PO} enjoys the regret bound 
\begin{align}
    \mathcal{R}(T) = O\left( d\sqrt{TH^3} \log(T/\alpha) + \sqrt{TH^3 \log |\mathcal{A}|} + \frac{\log(H/\delta)^{1/4}}{\epsilon^{1/2}} d^{7/4}\sqrt{TH^4} \log T \log(T/\alpha)^{1/4} \right).
\end{align}
Moreover, if $\sqrt{\log |\mathcal{A}|} = O(\log \frac{H}{\alpha} + d\log (1+KH^2))$, then \texttt{Private-LinOpt-PO} has the same regret bound as given in~\eqref{eq:regret_VI}.
\end{theorem}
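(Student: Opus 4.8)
The plan is to exploit the reduction that Sections~3--5 have already engineered: by Theorems~\ref{thm:VI} and~\ref{thm:PO}, the regret of \texttt{LinOpt-VI-Reg} and \texttt{LinOpt-PO-Reg} depends on the {\reg} only through the three constants $\lambda_{\max}$, $\lambda_{\min}$ and $\nu$ of the Regularity Assumption~\ref{ass:regularizer}. Hence it suffices to verify that the {\priv} assembled from the four binary counters satisfies Assumption~\ref{ass:regularizer} and to pin down the corresponding $\lambda_{\max,p},\lambda_{\min,p},\nu_p$ (and their $r$-counterparts); the claimed bounds then follow by substituting these into Theorems~\ref{thm:VI} and~\ref{thm:PO}.

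First I would characterize the injected noise. For $\mathcal{B}_{h,p,1}$ the regularizer is $Z_{h,p}^k = N_{h,p}^k + \gamma I$, where $N_{h,p}^k$ is the sum of the noisy P-sum perturbations that cover $[1,k-1]$; by property (ii) of Lemma~\ref{lem:fact} there are at most $m=\lceil\log K\rceil$ of them, each a symmetric $d_1\times d_1$ matrix with i.i.d.\ $\mathcal{N}(0,\sigma_{p,1}^2)$ entries. Thus $N_{h,p}^k$ is a symmetric Gaussian matrix with entry variance at most $m\sigma_{p,1}^2$, and the Gaussian-matrix concentration bound (Lemma~\ref{lem:norm_sym_gaussian}), together with a union bound over all $h\in[H],k\in[K]$, controls $\|N_{h,p}^k\|$ simultaneously by a quantity of order $\sqrt{m}\,\sigma_{p,1}(\sqrt{d_1}+\sqrt{\log(HK/\alpha)})$. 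Taking the shift $\gamma$ to be a constant multiple of this high-probability bound forces $\Lambda_{h,p}^k = G_{h,p}^k + Z_{h,p}^k \succ 0$ and gives matching order estimates $\lambda_{\min,p}=\Theta(\gamma)$ and $\lambda_{\max,p}=\Theta(\gamma)$. Analogously $z_{h,p}^k$ is a $d_1$-dimensional Gaussian vector with coordinate variance at most $m\sigma_{p,2}^2$, so the chi-square tail bound (Lemma~\ref{lem:chi-square}) controls $\|z_{h,p}^k\|_2$, and since $\|z_{h,p}^k\|_{(Z_{h,p}^k)^{-1}}\le \|z_{h,p}^k\|_2/\sqrt{\lambda_{\min,p}}$ I obtain $\nu_p$. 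The identical argument for $\mathcal{B}_{h,r,1},\mathcal{B}_{h,r,2}$ yields $\lambda_{\min,r},\lambda_{\max,r},\nu_r$, so Assumption~\ref{ass:regularizer} holds with probability at least $1-\alpha$.

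Next I would insert the noise scales fixed by the privacy analysis. To meet the $(\epsilon,\delta)$-JDP requirement of Theorem~\ref{thm:priv} I use the per-counter standard deviations from its proof with the billboard-lemma rescaling $\epsilon\mapsto\epsilon/H$, $\delta\mapsto\delta/H$, namely $\sigma_{p,1}^2=\Theta\!\big(m\,\Delta_{p,1}^2 H^2\log(H/\delta)/\epsilon^2\big)$ with $\Delta_{p,1}=d_1H^2$, and the analogous expressions for $\sigma_{p,2},\sigma_{r,1},\sigma_{r,2}$ using $\Delta_{p,2}=\sqrt{d_1}H^2$ and $\Delta_{r,1}=\Delta_{r,2}=1$. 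Propagating these through the previous step gives $\lambda_{\max}=\Theta\!\big(m\,d^{3/2}H^3\sqrt{\log(H/\delta)}/\epsilon\big)$ (dominated by the $p$-counter) and a strictly smaller $\nu$, so $\sqrt{d\lambda_{\max}}\gg\nu$ and the privacy-dependent term $\sqrt{THd\kappa_{\lambda_{\min}}}\,(\sqrt{d\lambda_{\max}}+\nu)$ in Theorems~\ref{thm:VI} and~\ref{thm:PO} is governed by $\sqrt{d\lambda_{\max}}$. Multiplying out $\sqrt{THd}\cdot\sqrt{d\lambda_{\max}}$ produces the factor $d^{1/2}\cdot d^{5/4}=d^{7/4}$ and $\sqrt{TH}\cdot H^{3/2}=\sqrt{TH^4}$, with the union-bound $\alpha$-dependence and $\sqrt{m}\sqrt{\kappa_{\lambda_{\min}}}$ contributing the remaining $\log T\,\log(T/\alpha)^{1/4}$ and the $\log(H/\delta)^{1/4}/\epsilon^{1/2}$ factor, matching~\eqref{eq:regret_VI}; carrying the extra $\sqrt{TH^3\log|\mathcal{A}|}$ term of Theorem~\ref{thm:PO} untouched gives the \texttt{Private-LinOpt-PO} bound, and the stated condition on $\log|\mathcal{A}|$ absorbs it to recover~\eqref{eq:regret_VI}.

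The main obstacle I anticipate is the uniform operator-norm control of the summed Gaussian noise matrices combined with careful exponent bookkeeping: one must keep $\lambda_{\min}$ and $\lambda_{\max}$ of the same order (so $\kappa_{\lambda_{\min}}$ stays logarithmic and positive definiteness is guaranteed), track how the $\Theta(d^{3/2})$ growth of $\lambda_{\max}$ combines with the $\sqrt{d}$ prefactor to yield exactly $d^{7/4}$, and confirm that $\nu$ is genuinely dominated so it does not inflate the rate. Everything else reduces to routine substitution into the two general regret theorems.
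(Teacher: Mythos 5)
Your proposal is correct and follows essentially the same route as the paper's proof: reduce to verifying Assumption~\ref{ass:regularizer} for the {\priv}, use property (ii) of Lemma~\ref{lem:fact} so the injected noise is a sum of at most $m=\lceil\log K\rceil$ Gaussian P-sum perturbations, control $\|N_{h,p}^k\|$ via Lemma~\ref{lem:norm_sym_gaussian} and $\|z_{h,p}^k\|$ via Lemma~\ref{lem:chi-square} with union bounds, shift by a constant multiple of the operator-norm bound so that $\lambda_{\min}$ and $\lambda_{\max}$ are of the same order (the paper takes $Z_{h,p}^k=N_{h,p}^k+2\Sigma_{p,1}I$, giving $\lambda_{\max,p}=3\Sigma_{p,1}$, $\lambda_{\min,p}=\Sigma_{p,1}$), plug in the JDP-calibrated noise scales $\sigma_{p,1},\sigma_{p,2},\sigma_{r,1},\sigma_{r,2}$ with the $\epsilon/H$, $\delta/H$ rescaling from Theorem~\ref{thm:priv}, and substitute into Theorems~\ref{thm:VI} and~\ref{thm:PO}. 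Your additional observation that $\nu$ is dominated by $\sqrt{d\lambda_{\max}}$ (by a factor of order $1/d$) is a correct refinement of the bookkeeping that the paper carries out implicitly by direct substitution.
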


\begin{proof}
As mentioned above, we only need to verify that the {\priv} satisfies Assumption~\ref{ass:regularizer} with proper choices of $\lambda_{\max,p}, \lambda_{\max,r}$, $\lambda_{\min,p}, \lambda_{\min,r}$ and $\nu_p, \nu_r$. We will illustrate the idea behind $\lambda_{\max,p}, \lambda_{\min,p}$ and $\nu_{p}$ here and the same approach applies to the other three constants. By property (ii) in Lemma~\ref{lem:fact}, we know that the total noise added into $\Lambda_{h,p}^k$ is a summation of $m$ symmetric random Gaussian matrix. That is, $N_{h,p}^k$ is a symmetric matrix with its $(i,j)$-th entry, $1\le i\le j\le d_1$, being i.i.d $\mathcal{N}(0, m\sigma_{p,1}^2)$. Therefore, by a tighter concentration bound for symmetric random Gaussian matrix (Lemma~\ref{lem:norm_sym_gaussian} in the appendix), we have with probability at least $1-\alpha/(4KH)$, for all $k\in [K], h\in[H]$
\begin{align*}
    \norm{N_{h,p}^k}\le \Sigma_{p,1} := \sigma_{p,1}\sqrt{m}\left(4\sqrt{d} + \sqrt{8\log(8KH/\alpha)}\right).
\end{align*}
Recall that $Z_{h,p}^k$ is a shift version of $N_{h,p}^k$ to guarantee that it is positive definite. In particular, we choose  $Z_{h,p}^k = N_{h,p}^k + 2\Sigma_{p,1} I$, which leads to $\lambda_{\max,p} = 3\Sigma_{p,1}$ and  $\lambda_{\min,p} = \Sigma_{p,1}$.

Similarly, the total noise in $u_{h,p}^k$ is a summation of $m$ random Gaussian vectors. That is, $z_{h,p}^k$ is a $d_1 \times 1$ vector whose each entry is i.i.d $\mathcal{N}(0, m\sigma_{p,2}^2)$. As a result, $\norm{z_{h,p}^k}^2/(m\sigma_{p,2}^2)$ a $\chi^2$-statistic with $d_1$ degrees of freedom, and hence by concentration bound of chi-square random variable (Lemma~\ref{lem:chi-square} in the appendix), we have with probability at least $1-\alpha/(4KH)$, for all $k\in [K], h\in[H]$
\begin{align*}
    \norm{z_{h,p}^k} \le \sqrt{m}\sigma_{p,2}\left(\sqrt{d_1} + \sqrt{2\log(4KH/\alpha)}\right).
\end{align*}
Thus, via a union bound, we have with probability at least $1-\alpha/2$, for all $h \in [H]$ and $k \in [K]$, 
\begin{align*}
    \norm{Z_{h,p}^k } \leq \lambda_{\max,p}, \;\; \norm{(Z_{h,p}^k)^{-1}} \leq 1/\lambda_{\min,p} \;\; \text{and}\;\; \norm{z_{h,p}^k}_{(Z_{h,p}^k)^{-1}} \leq \nu_{p},
\end{align*}
where $\nu_p = \sigma_{p,2}\sqrt{m/\Sigma_{p,1}}\left(\sqrt{d_1} + \sqrt{2\log(4KH/\alpha)}\right)$. Moreover, via the same analysis, we also have with probability at least $1-\alpha/2$, for all $h \in [H]$ and $k \in [K]$, 
\begin{align*}
    \norm{Z_{h,r}^k } \leq \lambda_{\max,r}, \;\; \norm{(Z_{h,r}^k)^{-1}} \leq 1/\lambda_{\min,r} \;\; \text{and}\;\; \norm{z_{h,r}^k}_{(Z_{h,r}^k)^{-1}} \leq \nu_{r},
\end{align*}
where $\lambda_{\max,r} = 3\lambda_{\min, r} = 3 \Sigma_r$,   $\nu_p = \sigma_{r,2}\sqrt{m/\Sigma_r}\left(\sqrt{d_2} + \sqrt{2\log(4KH/\alpha)}\right)$ and 
\begin{align*}
    \Sigma_r = \sigma_{r,1}\sqrt{m}\left(4\sqrt{d_2} + \sqrt{8\log(8KH/\alpha)}\right).
\end{align*}
Recall that to guarantee $O(\epsilon,\delta)$-JDP in Theorem~\ref{thm:priv}, we choose $\sigma_{p,1} = \frac{ H\Delta_{p,1}}{\epsilon} \sqrt{32m \ln(4H/\delta)}, \sigma_{p,2} = \frac{ H\Delta_{p,2}}{\epsilon} \sqrt{32m \ln(4H/\delta)}, \sigma_{r,1} = \frac{ H\Delta_{r,1}}{\epsilon} \sqrt{32m \ln(4H/\delta)}, \sigma_{r,2} = \frac{ H\Delta_{r,2}}{\epsilon} \sqrt{32m \ln(4H/\delta)}$ with $\Delta_{p,1} = d_1H^2 \le dH^2$, $\Delta_{p,2} = \sqrt{d_1} H^2 \le \sqrt{d}H^2$, $\Delta_{r,1} = \Delta_{r,2} = 1$. Substituting these constants into the regret bound in Theorems~\ref{thm:VI} and~\ref{thm:PO}, yields the final results. 
\end{proof}

\begin{remark}[Bound on $d$]
\label{rem:d}
Note that $d^{7/4}$ in the second term (i.e., private term) of both regret bounds comes from simply bounding both $d_1$ and $d_2$ by $d$. A tighter bound on $d$ is straightforward if it is written in an explicit form of $d_1$ and $d_2$. Note that if we assume that $\norm{\phi(s,a)} \le H$ (as in~\citep{he2021logarithmic,zhou2021nearly} ) rather than $\sqrt{d}H$ in Assumption~\ref{def:lin}, the private dependency on $d$ then reduces from $d^{7/4}$ to $d^{5/4}$.
\end{remark}

\begin{remark}[Comparisons with non-private algorithms]
{The non-private value-based algorithm UCRL-VTR~\citep{jia2020model} enjoys a regret upper bound $\widetilde{O}(d\sqrt{H^3T})$ and the non-private policy-based algorithm OPPO~\citep{cai2020provably} has a regret upper bound $\widetilde{O}(d\sqrt{TH^3} + \sqrt{TH^3 \log|\mathcal{A}|})$. On the other hand, \texttt{Private-LinOpt-VI} has a regret bound $\widetilde{O}(d^{7/4}\sqrt{TH^4} (\log(H/\delta)^{1/4}) \sqrt{1/\epsilon})$ and \texttt{Private-LinOpt-PO} has a regret bound $\widetilde{O}(d^{7/4}\sqrt{TH^4} (\log(H/\delta)^{1/4}) \sqrt{1/\epsilon} +\sqrt{TH^3 \log|\mathcal{A}|})$}. 
\end{remark}

\begin{remark}[Comparisons tabular private RL result]
{We compare our results to~\citep{vietri2020private}, which establishes regret for tabular value-based RL under $(\epsilon,0)$-JDP guarantee by injecting Laplace noise via binary counting mechanism. In particular, the claimed regret is $\widetilde{O}\left(\sqrt{SAH^3T} +  S^2AH^3/\epsilon\right)$, i.e., both non-private and private terms are polynomial in terms of the number of states and actions while ours are independent of them. Moreover, the dependence of $\epsilon$ is an additive term of $1/\epsilon$ in~\citep{vietri2020private} while ours have a multiplicative factor $1/\sqrt{\epsilon}$ under $(\epsilon,\delta)$-JDP. Finally, since tabular MDP is a special case of linear mixture MDP, our established regret bounds directly imply bounds for the tabular case (although may not be optimal in general through this reduction.)}
% We also improves the existing works on bandit settings under $(\epsilon,\delta)$-JDP. In particular, in both linear bandit~\citep{shariff2018differentially} and Gaussian process bandit~\citep{dubey2021no}, the cost of privacy scales with $\log(1/\delta)^{1/2}$ while ours is 

\end{remark}

\begin{remark}[General private RL design]
\label{rem:ldp}
As mentioned before, our two proposed algorithms (\texttt{LinOpt-VI-Reg} and \texttt{LinOpt-PO-Reg}) along with their regret bounds (Theorems~\ref{thm:VI} and~\ref{thm:PO}) provide a unified procedure to designing various private RL algorithms by varying the choice of {\reg} (or equivalently {\priv}). This not only enables to design new private RL algorithms with JDP guarantees, but allows us to establish RL algorithms with LDP guarantees. In this case, instead of a central {\priv} at the agent,  a local {\priv} at each user's side is used to inject independent Gaussian noise to user's trajectories before being collected by the agent. However, our analysis smoothly carries out and the only change is to determine the three new constants $\lambda_{\max}, \lambda_{\min}, \nu$ under stronger noise in the LDP setting, which can be obtained via the same approach in our current analysis. 
% In particular, we can obtain a regret bound $\widetilde{O}((dH)^{7/4}T^{3/4}(\log(H/\delta))^{1/4}\sqrt{1/\epsilon})$.
% In particular, instead of $\log K$ noise, we need to inject $K$ noise and hence we have $m = K$ in the proof of Theorem~\ref{thm:priv_regret}. Thus, we can obtain that under LDP constraint, the regret bound 

\end{remark}

\subsection{Discussions}
{In this paper, for the purpose of theoretical analysis, we assume realizability, i.e. the MDP in fact lies within the class of linear mixture MDPs. One important future direction is to consider misspecified linear models (cf.~\citep{jin2020provably, lattimore2020learning}).}
% Although in this paper we focus on theoretical guarantees of private RL algorithms as in previous works~\citep{vietri2020private,garcelon2020local}, it will also be helpful to understand private RL algorithms from a practitioner's perspective via simulation or experimental results.  
Another important direction is to  apply our flexible framework to MDPs with other types of function approximation, e.g., kernelized MDPs or  overparameterized neural network \citep{yang2020function} and generic MDPs \citep{ayoub2020model}. Specifically, at least in the kernelized MDPs settings (i.e.,  reproducing kernel
Hilbert space (RKHS)), it turns out that our current analysis can be easily carried out by using the standard  Quadrature Fourier Features (QFF) approximations~\citep{mutny2019efficient}. It is worth noting that one drawback of QFF approximation is that it only works with standard squared exponential kernel but not with more practical Mat\'ern kernel~\citep{calandriello2019gaussian}.
One interesting question is whether it is possible to use adaptive approximation methods (e.g., Nystr\"{o}m Embedding~\citep{calandriello2019gaussian}) that works for both kernels in the private RL learning. The difficulty here is that the truncated dimension is changing in each round.
In the setting with general function approximations, it is still not clear how to generalize our current analysis with a computationally efficient private scheme such as the binary counting mechanism used in our paper.
Finally, it will also be interesting to consider other privacy guarantees beyond JDP and LDP, e.g., \emph{shuffle model of DP}~\citep{cheu2019distributed}, that achieves a smooth transition between JDP and LDP. This is particularly useful even in the simpler linear bandit setting as the regret under LDP is $O(T^{3/4})$~\citep{zheng2020locally} while the regret under JDP is $O(\sqrt{T})$~\citep{shariff2018differentially}. Thus, one interesting question is whether the shuffle model of DP can be utilized to achieve a similar regret as in JDP while providing the same strong privacy guarantee as in LDP, which is one of ongoing works.

\section{Proofs of Theorems~\ref{thm:VI} and~\ref{thm:PO}}
In this section, we present the detailed proofs for Theorems~\ref{thm:VI} and~\ref{thm:PO}. To start with, we define the following filtration which will be frequently used in the proofs. This filtration can be regarded as the bandit-variant of the one in~\citep{cai2020provably}.

\begin{definition}[Filtration]
\label{def:fil}
For any $(k,h) \in [K] \times [H]$, we define $\mathcal{F}_{k,h,1}$ as the $\sigma$-algebra generated by the following state-action-reward sequence,
\begin{align*}
    \{(s_i^{\tau}, a_i^{\tau}, r_i^{\tau}) \}_{(\tau, i)\in [k-1] \times [H]} \cup \{(s_i^k, a_i^k, r_i^k)_{i \in [h]}\}
\end{align*}
and $\mathcal{F}_{k,h,2}$ as the $\sigma$-algebra generated by 
\begin{align*}
    \{(s_i^{\tau}, a_i^{\tau}, r_i^{\tau}) \}_{(\tau, i)\in [k-1] \times [H]} \cup \{(s_i^k, a_i^k, r_i^k)_{i \in [h]}\} \cup \{s_{h+1}^k\},
\end{align*}
where we define $s_{H+1}^k$ as a null state for any $k \in [K]$. The $\sigma$-algebra sequence $\{\mathcal{F}_{k,h,m}\}_{(k,h,m) \in [K]\times [H]\times 2}$ is a filtration with respect to the timestep index 
\begin{align*}
    t(k,h,m) = 2H(k-1) + 2(h-1) + m.
\end{align*}
In other words, for any $t(k,h,m) \le t(k',h',m')$, it holds that $\mathcal{F}_{k,h,m} \subseteq \mathcal{F}_{k',h',m'}$.
\end{definition}

By the definition, we immediately see that the estimated value function $V_h^k$ and $Q$-function $Q_h^k$ in both algorithms are measurable to $\mathcal{F}_{k,1,1}$ as they are obtained based on the $(k-1)$ historical trajectories. 
Moreover, throughout the proofs, we let $\mathcal{E}(\alpha): = \{\theta_{h,p} \in \mathcal{C}_{h,p}^{k} \text{ and }  \theta_{h,r} \in \mathcal{C}_{h,r}^{k}, \forall h \in [H], k \in [K]\}$ denote the `good' event, which by Lemma~\ref{lem:CI}, holds with probability at least $1-3\alpha$.

\subsection{Proof of Theorem~\ref{thm:VI}}
\label{sec:proofthmvi}
We first show that the estimated $Q$-function and value function in Algorithm~\ref{alg:UCB-VI} is optimistic. First note that, under the nice event given by $\mathcal{E}(\alpha)$, we have that for all $s,a,k,h$
\begin{align*}
    &\vp(s,a)^{\top} \hat{\bt}_{h,r}^k+ \phi_h^k(s,a)^{\top} \hat{\bt}_{h,p}^k + ( {\Gamma}_{h,p}^k +  {\Gamma}_{h,r}^k)(s,a) \ge r_h(s,a) + [\mathbb{P}_h V_{h+1}^k](s,a) \ge 0
\end{align*}
Thus, under event $\mathcal{E}(\alpha)$, we can remove the truncation at zero in the update of $Q$ function in Algorithm~\ref{alg:UCB-VI}. 
\begin{lemma}
\label{lem:op}
Under event $\mathcal{E}(\alpha)$, for any $s,a,k,h$, we have $Q_{h}^k(s,a) \ge Q^*_{h}(s,a)$ and $V_{h}^k(s,a) \ge V^*_{h}(s,a)$.
\end{lemma}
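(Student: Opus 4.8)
The plan is to prove optimism by backward induction on $h$, from $h = H+1$ down to $h = 1$, using the fact that under the good event $\mathcal{E}(\alpha)$ the confidence sets capture the true parameters $\theta_{h,p}$ and $\theta_{h,r}$. The base case is immediate: at step $H+1$ both $Q_{H+1}^k$ and $Q_{H+1}^*$ (as well as the value functions) are zero by the initialization $V_{H+1}^k \equiv 0$, so the claimed inequalities hold trivially. For the inductive step, I assume $V_{h+1}^k(s) \ge V_{h+1}^*(s)$ for all $s$ and aim to show $Q_h^k(s,a) \ge Q_h^*(s,a)$ for all $(s,a)$, from which $V_h^k(s) = \max_a Q_h^k(s,a) \ge \max_a Q_h^*(s,a) = V_h^*(s)$ follows immediately.

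The core of the inductive step is to control the estimation error in the one-step Bellman update. Writing $Q_h^*(s,a) = r_h(s,a) + [\mathbb{P}_h V_{h+1}^*](s,a)$, I would decompose the difference between the (untruncated) estimate and $Q_h^*$ as
\begin{align*}
    &\vp(s,a)^{\top}\hat{\bt}_{h,r}^k + \phi_h^k(s,a)^{\top}\hat{\bt}_{h,p}^k + (\Gamma_{h,p}^k + \Gamma_{h,r}^k)(s,a) - Q_h^*(s,a)\\
    &= \left[\vp(s,a)^{\top}\hat{\bt}_{h,r}^k - r_h(s,a)\right] + \left[\phi_h^k(s,a)^{\top}\hat{\bt}_{h,p}^k - [\mathbb{P}_h V_{h+1}^k](s,a)\right]\\
    &\quad + \left[[\mathbb{P}_h V_{h+1}^k](s,a) - [\mathbb{P}_h V_{h+1}^*](s,a)\right] + (\Gamma_{h,p}^k + \Gamma_{h,r}^k)(s,a).
\end{align*}
The third bracket is nonnegative by the inductive hypothesis and monotonicity of $\mathbb{P}_h$. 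For the first two brackets I use the linear mixture structure: $r_h(s,a) = \vp(s,a)^{\top}\theta_{h,r}$ and $[\mathbb{P}_h V_{h+1}^k](s,a) = \phi_h^k(s,a)^{\top}\theta_{h,p}$, so each bracket becomes an inner product of a feature with an estimation error $\hat{\bt} - \bt$. By Cauchy--Schwarz in the $\Lambda_{h,\cdot}^k$-norm and the confidence bounds from Lemma~\ref{lem:CI} (valid under $\mathcal{E}(\alpha)$), these are lower bounded by $-\Gamma_{h,r}^k(s,a)$ and $-\Gamma_{h,p}^k(s,a)$ respectively, exactly cancelling against the bonus terms. Hence the whole expression is $\ge 0$, which is precisely the displayed inequality preceding the lemma; this shows the untruncated estimate dominates $Q_h^*$.

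The final step is to argue that truncation preserves optimism. Since $0 \le Q_h^*(s,a) \le H - h + 1$ (rewards lie in $[0,1]$ and there are $H-h+1$ remaining steps), and since the untruncated estimate is $\ge Q_h^*(s,a) \ge 0$, clipping it into $[0, H-h+1]$ via $\max\{0, \min\{H-h+1, \cdot\}\}$ can only move it toward, but never below, $Q_h^*(s,a)$: if the estimate exceeds $H-h+1$ it is clipped down to $H-h+1 \ge Q_h^*$, and the lower clip at $0$ is harmless since the estimate is already nonnegative. Thus $Q_h^k(s,a) \ge Q_h^*(s,a)$ survives truncation, closing the induction.

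I expect the main obstacle to be the bookkeeping in the bracket decomposition rather than any deep difficulty: one must be careful that the bonus $\Gamma_{h,p}^k$ is designed with $\phi_h^k$ (which depends on the \emph{estimated} $V_{h+1}^k$, not $V_{h+1}^*$), and that the confidence bound of Lemma~\ref{lem:CI} is stated for this very $V_{h+1}^k$ — so the concentration applies to the correct feature. The role of the inductive hypothesis is precisely to handle the mismatch between $V_{h+1}^k$ and $V_{h+1}^*$ through the nonnegative third bracket, keeping the confidence-set argument clean for the first two brackets.
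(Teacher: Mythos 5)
Your proposal is correct and follows essentially the same argument as the paper: backward induction, with the untruncated estimate compared to $Q_h^*$ via the linear mixture structure, Cauchy--Schwarz against the confidence sets under $\mathcal{E}(\alpha)$ to cancel the bonuses, and the inductive hypothesis plus monotonicity of $\mathbb{P}_h$ for the remaining term. Your handling of the truncation (clipping to $[0,H-h+1]$ cannot break optimism since $Q_h^* \le H-h+1$ and the untruncated estimate is nonnegative) is in fact slightly more careful than the paper's case split, which compares against $H$ rather than $H-h+1$, but the substance is identical.
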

\begin{proof}
The proof is standard (cf.~\citep[Lemma C.4]{ding2021provably}). We present the proof here for ease of reference. The result is proved by induction. First note that the base case for $h=H+1$ holds trivially since all the values are zero. Fix a $k$, suppose the result holds $h+1$, i.e., $Q_{h}^{k+1}(\cdot,\cdot) \ge Q^*_{h+1}(\cdot,\cdot)$, $V_{h+1}^k(\cdot,\cdot) \ge V^*_{h+1}(\cdot,\cdot)$. Given $s,a$, if $Q_{h}^k(s,a) \ge H$, then $Q_{h}^k(s,a) \ge H \ge Q^*_{h}(s,a)$. Otherwise, we have 
\begin{align*}
    &Q_{h}^k(s,a) - Q^*_{h}(s,a)\\
    \ep{a}&\inner{\hat{\bt}_{h,p}^k - \bt_{h,p}}{\phi_h^k(s_h^k,a_h^k)} \!+\! \inner{\hat{\bt}_{h,r}^k - \bt_{h,r}}{\vp(s_h^k, a_h^k)} 
    \!+\! [\mathbb{P}_hV_{h+1}^k](s_h^k,a_h^k)\nonumber \\
    &\!+\! \beta_p(\alpha)\norm{\phi_h^k(s_h^k,a_h^k)}_{(\Lambda_{h,p}^k)^{-1}} +\beta_r(\alpha)\norm{\vp(s_h^k,a_h^k)}_{(\Lambda_{h,r}^k)^{-1}} \!-\![\mathbb{P}_hV_{h+1}^{*}](s_h^k,a_h^k)\nonumber\\
    \gep{b}& [\mathbb{P}_hV_{h+1}^k](s,a) - [\mathbb{P}_hV_{h+1}^{*}](s,a) \\
    \gep{c}&  0
\end{align*}
where (a) holds by adding and subtracting the same term, which holds by the linear kernel MDP model; (b) holds by event $\mathcal{E}(\alpha)$ and Cauchy-Schwarz inequality; (c) holds by the induction assumption and the fact that $\mathbb{P}_h$ is a monotone operator with respect the partial ordering of functions. Thus, for all $(s,a)$, we have established that $Q_{h}^k(s,a) \ge Q^*_{h}(s,a)$, which directly implies that $V_{h}^k(s,a) \ge V^*_{h}(s,a)$.
\end{proof}

Now, we are ready to present the proof of Theorem~\ref{thm:VI}.
\begin{proof}
First, we have under event $\mathcal{E}(\alpha)$,
\begin{align}
    &V_{h}^k(s_h^k) - V_{h}^{\pi^k}(s_h^k)\nonumber\\
    \lep{a}& \inner{\hat{\bt}_{h,p}^k}{\phi_h^k(s_h^k,a_h^k)} +\inner{\hat{\bt}_{h,r}^k}{\vp(s_h^k,a_h^k)} + \beta_p(\alpha)\norm{\phi_h^k(s_h^k,a_h^k)}_{(\Lambda_{h,p}^k)^{-1}} +\beta_r(\alpha)\norm{\vp(s_h^k,a_h^k)}_{(\Lambda_{h,r}^k)^{-1}}\nonumber \\
    & r_h(s_h^k, a_h^k)  - [\mathbb{P}_hV_{h+1}^{\pi^k}](s_h^k,a_h^k)\nonumber\\
    \ep{b}& \inner{\hat{\bt}_{h,p}^k - \bt_{h,p}}{\phi_h^k(s_h^k,a_h^k)} \!+\! \inner{\hat{\bt}_{h,r}^k - \bt_{h,r}}{\vp(s_h^k, a_h^k)} 
    \!+\! [\mathbb{P}_hV_{h+1}^k](s_h^k,a_h^k)\nonumber \\
    &\!+\! \beta_p(\alpha)\norm{\phi_h^k(s_h^k,a_h^k)}_{(\Lambda_{h,p}^k)^{-1}} +\beta_r(\alpha)\norm{\vp(s_h^k,a_h^k)}_{(\Lambda_{h,r}^k)^{-1}} \!-\![\mathbb{P}_hV_{h+1}^{\pi^k}](s_h^k,a_h^k)\nonumber\\
    \lep{c}&[\mathbb{P}_hV_{h+1}^k](s_h^k,a_h^k) -[\mathbb{P}_hV_{h+1}^{\pi^k}](s_h^k,a_h^k) + 2\beta_{p}(\alpha)\norm{\phi_h^k(s_h^k,a_h^k)}_{(\Lambda_{h,p}^k)^{-1}}+2\beta_r(\alpha)\norm{\vp(s_h^k,a_h^k)}_{(\Lambda_{h,r}^k)^{-1}}\label{eq:first}
\end{align}
where (a) holds by the update of $V_{h}^k$ in Algorithm~\ref{alg:UCB-VI}, the Bellman equation for $V_h^{\pi^k}$, and  the greedy selection of the algorithm; (b) holds by the linear mixture MDP model; (c) holds by Cauchy-Schwarz inequality and the event of $\mathcal{E}(\alpha)$.

Now, by using Lemma~\ref{lem:op}, we can further bound~\eqref{eq:first} as follows. 
\begin{align*}
     &V_{h}^k(s_h^k) - V_{h}^{\pi^k}(s_h^k)\\
     \lep{a}&\min\left\{H,[\mathbb{P}_hV_{h+1}^k](s_h^k,a_h^k) -[\mathbb{P}_hV_{h+1}^{\pi^k}](s_h^k,a_h^k) + 2\beta_{p}(\alpha)\norm{\phi_h^k(s_h^k,a_h^k)}_{(\Lambda_{h,p}^k)^{-1}}+2\beta_r(\alpha)\norm{\vp(s_h^k,a_h^k)}_{(\Lambda_{h,r}^k)^{-1}}\right\}\\
     \lep{b}& \min\left\{H, 2\beta_{p}(\alpha)\norm{\phi_h^k(s_h^k,a_h^k)}_{(\Lambda_{h,p}^k)^{-1}}+2\beta_r(\alpha)\norm{\vp(s_h^k,a_h^k)}_{(\Lambda_{h,r}^k)^{-1}}\right\} + [\mathbb{P}_hV_{h+1}^k](s_h^k,a_h^k) -[\mathbb{P}_hV_{h+1}^{\pi^k}](s_h^k,a_h^k)\\
     \lep{c}&2\beta_p(\alpha)\min\left\{1, \norm{\phi_h^k(s_h^k,a_h^k)}_{(\Lambda_{h,p}^k)^{-1}}\right\} + 2H\beta_r(\alpha)\min\left\{1, \norm{\vp(s_h^k,a_h^k)}_{(\Lambda_{h,r}^k)^{-1}}\right\} \\
     &+ [\mathbb{P}_hV_{h+1}^k](s_h^k,a_h^k) -[\mathbb{P}_hV_{h+1}^{\pi^k}](s_h^k,a_h^k)
\end{align*}
where (a) holds by the fact that value function is bounded in the range $[0,H]$; (b) is true since $V_{h+1}^k(\cdot) \ge V_{h+1}^{\pi^k}(\cdot)$ by Lemma~\ref{lem:op} on event $\mathcal{E}(\alpha)$; (c) is truce since $2\beta_p(\alpha) \ge H$ and $2H\beta_r(\alpha) \ge H$  by the definitions of $\beta_p(\alpha)$ and $\beta_r(\alpha) $ in Lemma~\ref{lem:CI}. Therefore, we have 
\begin{align*}
    &V_{h}^k(s_h^k) - V_{h}^{\pi^k}(s_h^k) - \left(V_{h+1}^k(s_{h+1}^k) - V_{h+1}^{\pi^k}(s_{h+1}^k)\right)\\
    \le&2\beta_p(\alpha)\min\left\{1, \norm{\phi_h^k(s_h^k,a_h^k)}_{(\Lambda_{h,p}^k)^{-1}}\right\} + 2H\beta_r(\alpha)\min\left\{1, \norm{\vp(s_h^k,a_h^k)}_{(\Lambda_{h,r}^k)^{-1}}\right\} \\
    &+ \mathbb{P}_h[V_{h+1}^k - V_h^{\pi^k}](s_h^k,a_h^k) - [V_{h+1}^k - V_h^{\pi^k}](s_{h+1}^k).
\end{align*}
Therefore, taking a summation over $k\in [K]$ and $h \in [H]$, yields that under event  $\mathcal{E}(\alpha)$
\begin{align*}
    \sum_{k=1}^K V_{1}^k(s_1^k) - V_{1}^{\pi^k}(s_1^k) &\le \underbrace{\sum_{k=1}^K \sum_{h=1}^H2\beta_p(\alpha)\min\left\{1, \norm{\phi_h^k(s_h^k,a_h^k)}_{(\Lambda_{h,p}^k)^{-1}}\right\} }_{\mathcal{T}_1}\\
    &+ \underbrace{\sum_{k=1}^K \sum_{h=1}^H2H\beta_r(\alpha)\min\left\{1, \norm{\vp(s_h^k,a_h^k)}_{(\Lambda_{h,r}^k)^{-1}}\right\}}_{\mathcal{T}_2}\\
    &+ \underbrace{\sum_{k=1}^K\sum_{h=1}^H \mathbb{P}_h[V_{h+1}^k - V_h^{\pi^k}](s_h^k,a_h^k) - [V_{h+1}^k - V_h^{\pi^k}](s_{h+1}^k)}_{\mathcal{T}_3}.
\end{align*}
Now, we are left to bound each of the two terms. First, $\mathcal{T}_3$ is a summation of a bounded martingale difference sequence, with boundedness being $H$. To see this, we let $Y_{k,h}:=\mathbb{P}_h[V_{h+1}^k - V_h^{\pi^k}](s_h^k,a_h^k) - [V_{h+1}^k - V_h^{\pi^k}](s_{h+1}^k)$. Then, we have $Y_{k,h} \in \mathcal{F}_{k,h,2}$ and $\ex{ Y_{k,h} \mid \mathcal{F}_{k,h,1}} = 0$. Thus, by the Azuma–Hoeffding inequality, we have that for any fixed $\alpha \in (0,1]$,  with probability at least $1-\alpha$,
\begin{align*}
    \mathcal{T}_3 \le H\sqrt{2KH\log(1/\alpha)}.
\end{align*}
To bound $\mathcal{T}_1$ and $\mathcal{T}_2$, we will use the standard elliptical potential lemma (cf.~\citep[Lemma 11]{abbasi2011improved}). In particular, we have 
\begin{align*}
    \mathcal{T}_1 &\lep{a} 2\beta_p(\alpha)\sqrt{KH}\sqrt{\sum_{k=1}^K\sum_{h=1}^H \min\left\{1, \norm{\phi_h^k(s_h^k,a_h^k)}_{(\Lambda_{h,p}^k)^{-1}}^2\right\}}\\
    &\lep{b}2\beta_p(\alpha)\sqrt{KH}\sqrt{2Hd_1\log(1+KH^2/(\lambda_{\min,p}))},
\end{align*}
where (a) holds by Cauchy-Schwarz  inequality; (b) follows from Lemma 11 in~\citep{abbasi2011improved} and the last step in the proof of Lemma~\ref{lem:CI}. Similarly, we have 
\begin{align*}
    \mathcal{T}_2 &\le 2H\beta_r(\alpha)\sqrt{KH}\sqrt{\sum_{k=1}^K\sum_{h=1}^H \min\left\{1, \norm{\vp(s_h^k,a_h^k)}_{(\Lambda_{h,r}^k)^{-1}}^2\right\}}\\
    &\le 2H\beta_r(\alpha)\sqrt{KH}\sqrt{2Hd_2\log(1+K/(d_2\lambda_{\min,r}))}.
\end{align*}
Finally, note that $V_1^*(s_1^k) \le V_{1}^k(s_1^k)$ for all $k$ by Lemma~\ref{lem:op}. Putting everything together, we have for any $\alpha \in (0,1]$, with probability at least $1-4\alpha$, Algorithm~\ref{alg:UCB-VI} achieves a regret bound
\begin{align*}
    \mathcal{R}(T) &= O\left(\sqrt{TH^3}\left(\log \frac{H}{\alpha} + d \kappa_{\lambda_{\min}}\right) + \sqrt{TH d \kappa_{\lambda_{\min}} } \left(\sqrt{d \lambda_{\max}} + \nu\right) \right)
\end{align*}
where $\kappa_{\lambda_{\min}}:=\log\left(1+KH^2/(\lambda_{\min})\right)$ and $\lambda_{\min} = \min\{\lambda_{\min, p}, \lambda_{\min, r}\}$, $\lambda_{\max} = \max\{\lambda_{\max, p}, \lambda_{\max, r}\}$ and $\nu = \max\{\nu_{p}, \nu_{r}\} $
\end{proof}

\section{Proof of Theorem~\ref{thm:PO}}
\label{sec:proofthmpo}
\begin{proof}
We start with the following standard regret decomposition (cf.~\citep[Lemma 4.2]{cai2020provably}). 
\begin{align*}
\mathcal{R}(T)  &= \sum_{k=1}^K\left(V_1^*(s_1^k) - V_1^{\pi_k} (s_1^k)\right)\\
&= \underbrace{\sum_{k=1}^K \sum_{h=1}^H \mathbb{E}_{\pi^*}\left[ \inner{Q_h^k(s_h^k)}{\pi_h^*(\cdot | s_h)- \pi_h^{\pi_k}(\cdot | s_h)} \mid s_1 = s_1^k\right] }_{(i)} + \underbrace{\mathcal{M}_{K,H,2}}_{(ii)}\\
&+ \underbrace{\sum_{k=1}^K \sum_{h=1}^H \left(\mathbb{E}_{\pi^*}[\iota_h^k(x_h,a_h) \mid s_1 = s_1^k] - \iota_h^k(s_h^k, a_h^k)\right) }_{(iii)},
\end{align*}
where $\iota_h^k:=r_h + \mathbb{P}_h V_{h+1}^k - Q_h^k$ and $\mathbb{E}_{\pi}[\cdot]$ is taking expectation with respect to $\pi$ with transitions  $\mathbb{P}_h$.

Here, $\{\mathcal{M}_{k,h,m}\}_{(k,h,m) \in [K]\times[H]\times [2]}$ is a martingale adapted to the filtration $\{\mathcal{F}_{k,h,,}\}_{(k,h,m) \in [K]\times[H]\times [2]}$ with respect to the timestep in Definition~\ref{def:fil}. Specifically, we have 
\begin{align*}
    \mathcal{M}_{K,H,2} = \sum_{k=1}^K\sum_{h=1}^H (D_{k,h,1} + D_{k,h,2}),
\end{align*}
where 
\begin{align*}
    &D_{k,h,1} = \left(\mathbb{J}_{k,h}(Q_h^k - Q_h^{\pi^k}) \right)(s_h^k) - (Q_h^k - Q_h^{\pi^k})(s_h^k, a_h^k)\\
    &D_{k,h,2} = \mathbb{P}_h[V_{h+1}^k - V_h^{\pi^k}](s_h^k,a_h^k) - [V_{h+1}^k - V_h^{\pi^k}](s_{h+1}^k),
\end{align*}
and $(\mathbb{J}_{k,h}f)(x) :=\inner{f(x,\cdot)}{\pi_h^k(\cdot\mid x)}$. Thus, by the boundedness of value functions and $Q$-functions along with Azuma–Hoeffding inequality, we have $ (ii) \le 4H\sqrt{2T \log(1/\alpha) }$. The first term can be bounded by using standard property of mirror-descent update and the fact that $\pi^1$ is a uniform distribution. In particular, as in~\citep[Eq. C.2]{cai2020provably}, we have $  (i) \le \sqrt{2H^3T\log |\mathcal{A}|}$ when $\eta = \sqrt{2\log |\mathcal{A}|/(HT)}$ in Algorithm~\ref{alg:UCB-PO}. The key part is to bound the third term. To this end, we will show that 
under good event $\mathcal{E}(\alpha)$, we have for all $s,a,k,h$, 
\begin{align}
\label{eq:opt}
    2 ( {\Gamma}_{h,p}^k +  {\Gamma}_{h,r}^k)(s,a) \ge -\iota_h^k(s,a) \ge 0.
\end{align}
To show this result, we first note that under $\mathcal{E}(\alpha)$, for all $s,a,k,h$
\begin{align*}
    &\vp(s,a)^{\top} \hat{\bt}_{h,r}^k+ \phi_h^k(s,a)^{\top} \hat{\bt}_{h,p}^k + ( {\Gamma}_{h,p}^k +  {\Gamma}_{h,r}^k)(s,a) \ge r_h(s,a) + [\mathbb{P}_h V_{h+1}^k](s,a) \ge 0
\end{align*}
Thus, under event $\mathcal{E}(\alpha)$, we can remove the truncation at zero in the update of $Q$ function in Algorithm~\ref{alg:UCB-PO}. Thus, we have 
\begin{align*}
    -\iota_h^k(s,a) &= Q_h^k(s,a) - (r_h(s,a) + [\mathbb{P}_h V_{h+1}^k](s,a))\\
    &\le \vp(s,a)^{\top} \hat{\bt}_{h,r}^k+ \phi_h^k(s,a)^{\top} \hat{\bt}_{h,p}^k + ( {\Gamma}_{h,p}^k +  {\Gamma}_{h,r}^k)(s,a)-(r_h(s,a) + [\mathbb{P}_h V_{h+1}^k](s,a))\\
    &\le  2 ( {\Gamma}_{h,p}^k +  {\Gamma}_{h,r}^k)(s,a).
\end{align*}

Meanwhile, we have under event $\mathcal{E}(\alpha)$
\begin{align*}
    &\iota_h^k(s,a) \\
    =& (r_h(s,a) + [\mathbb{P}_h V_{h+1}^k](s,a)) - Q_h^k(s,a)\\
    =& (r_h(s,a) + [\mathbb{P}_h V_{h+1}^k](s,a)) - \min\{H,\vp(s,a)^{\top} \hat{\bt}_{h,r}^k+ \phi_h^k(s,a)^{\top} \hat{\bt}_{h,p}^k + ( {\Gamma}_{h,p}^k +  {\Gamma}_{h,r}^k)(s,a)\}\\
    =&\max\{r_h(s,a) + [\mathbb{P}_h V_{h+1}^k](s,a)-H, r_h(s,a) + [\mathbb{P}_h V_{h+1}^k](s,a) - \vp(s,a)^{\top} \hat{\bt}_{h,r}^k- \phi_h^k(s,a)^{\top} \hat{\bt}_{h,p}^k - ( {\Gamma}_{h,p}^k +  {\Gamma}_{h,r}^k)(s,a) \}\\
    \le& 0
\end{align*}

Therefore, similar to the proof of Theorem~\ref{thm:VI}, we have 
\begin{align*}
    (iii) = O\left(\sqrt{TH^3}\left(\log \frac{H}{\alpha} + d \kappa_{\lambda_{\min}}\right) + \sqrt{TH d \kappa_{\lambda_{\min}} } \left(\sqrt{d \lambda_{\max}} + \nu\right) \right)
\end{align*}
where $\kappa_{\lambda_{\min}}:=\log\left(1+KH^2/(\lambda_{\min})\right)$ and $\lambda_{\min} = \min\{\lambda_{\min, p}, \lambda_{\min, r}\}$, $\lambda_{\max} = \max\{\lambda_{\max, p}, \lambda_{\max, r}\}$ and $\nu = \max\{\nu_{p}, \nu_{r}\} $.

Putting everything together, we have 
\begin{align*}
    \mathcal{R}(T) = O\left(\sqrt{TH^3}\left(\log \frac{H}{\alpha} + d \kappa_{\lambda_{\min}}\right) +  \sqrt{TH^3 \log |\mathcal{A}|} + \sqrt{TH d \kappa_{\lambda_{\min}} } \left(\sqrt{d \lambda_{\max}} + \nu\right) \right)
\end{align*}
\end{proof}

\section{Conclusion}
In this work, we took first step towards private RL algorithms with linear function approximations. We proposed  both value-based and policy-based optimistic private RL algorithms and show that they enjoy sublinear regret in total number of steps while guaranteeing joint differential privacy (JDP). Moreover, these regret bounds are independent of the number of states and at most depends logarithmically on the number of actions in the policy optimization setting. Our established results build on a flexible procedure that also allows us to design general private RL algorithms with different privacy schemes under JDP and even under LDP guarantees by injecting Gaussian noise directly to each user's data. As discussed before, our work also opens the door to a series of interesting future works.

\section{Acknowledgement}
The author would like to thank Sayak Ray Chowdhury for insightful discussions and Quanquan Gu for the discussion on the concurrent work.

\bibliographystyle{unsrtnat}
\bibliography{main,isit}

\newpage

\begin{appendix}
\section{Proof of Lemma~\ref{lem:CI}}
\label{sec:proofCI}
\begin{proof}
Fix $h \in [H]$. Define $X_{h,p}^k \in \Real^{(k-1)\times d_1}$ to be a matrix whose $j$-th row is $(\phi_h^j(s_h^j,a_h^j))^{\top}$, and a `noise' column vector $\eta_{h,p}^k$ whose $j$-th row is $V_{h+1}^j(s_{h+1}^j) - \inner{\phi_h^j(s_h^j,a_h^j) }{\bt_{h,p}} $. Then, we have
 for any $k\ge 1$,
\begin{align*}
    &\bt_{h,p} - \hat{\bt}_{h,p}^k\\
    =&\bt_{h,p}- [\Lambda_{h,p}^k]^{-1}u_{h,p}^k\\
    =&\bt_{h,p} - [\Lambda_{h,p}^k]^{-1}\left( (X_{h,p}^k) ^{\top} X_{h,p}^k \bt_{h,p} + (X_{h,p}^k)^{\top}\eta_{h,p}^k + z_{h,p}^k \right)\\
    =&\bt_{h,p} - [\Lambda_{h,p}^k]^{-1}\left( \Lambda_{h,p}^k\bt_{h,p} - Z_{h,p}^k\bt_{h,p} + (X_{h,p}^k)^{\top}\eta_{h,p}^k + z_{h,p}^k \right)\\
    =& [\Lambda_{h,p}^k]^{-1}\left(Z_{h,p}^k\bt_{h,p} - (X_{h,p}^k)^{\top}\eta_{h,p}^k - z_{h,p}^k\right).
\end{align*}
Therefore, we have 
\begin{align*}
    \norm{\bt_{h,p}- \hat{\bt}_{h,p}^k}_{\Lambda_{h,p}^k} &= \norm{Z_{h,p}^k\bt_{h,p} - (X_{h,p}^k)^{\top}\eta_{h,p}^k - z_{h,p}^k}_{[\Lambda_{h,p}^k]^{-1}}\\
    &\lep{a}  \norm{(X_{h,p}^k)^{\top}\eta_{h,p}^k}_{[\Lambda_{h,p}^k]^{-1}} + \norm{Z_{h,p}^k\bt_{h,p}}_{[\Lambda_{h,p}^k]^{-1}}+ \norm{z_{h,p}^k}_{[\Lambda_{h,p}^k]^{-1}}\\
    &\lep{b}\norm{(X_{h,p}^k)^{\top}\eta_{h,p}^k}_{[\Lambda_{h,p}^k]^{-1}} + \norm{Z_{h,p}^k\bt_{h,p}}_{[Z_{h,p}^k]^{-1}}+ \norm{z_{h,p}^k}_{[Z_{h,p}^k]^{-1}}\\
    &= \underbrace{\norm{(X_{h,p}^k)^{\top}\eta_{h,p}^k}_{[\Lambda_{h,p}^k]^{-1}}}_{\mathcal{B}_{1,p}} + \underbrace{\norm{\bt_{h,p}}_{Z_{h,p}^k} + \norm{z_{h,p}^k}_{[Z_{h,p}^k]^{-1}}}_{\mathcal{B}_{2,p}},
\end{align*}
where (a) follows from triangle inequality;  (b) holds by ${\Lambda}_{h,p}^k \succeq Z_{h,p}^k$.
% (c) holds by $G_{k,h}:=X_{k,h}^T X_{k,h}$ and $Z_{k,h} \succeq \lambda_{\min,h} I$ under Assumption~\ref{ass:regularizer}.

By the same argument, we have 
\begin{align*}
    \norm{\bt_{h,r}- \hat{\bt}_{h,r}^k}_{\Lambda_{h,r}^k} \le \underbrace{\norm{(X_{h,r}^k)^{\top}\eta_{h,r}^k}_{[\Lambda_{h,r}^k]^{-1}}}_{\mathcal{B}_{1,r}} + \underbrace{\norm{\bt_{h,r}}_{Z_{h,r}^k} + \norm{z_{h,r}^k}_{[Z_{h,r}^k]^{-1}}}_{\mathcal{B}_{2,r}},
\end{align*}
where $X_{h,r}^k \in \mathbb{R}^{(k-1) \times d_2}$ is a matrix whose rows are $(\vp(s_h^1,a_h^1))^{\top},  \ldots, (\vp(s_h^{k-1},a_h^{k-1}))^{\top}$ and $\eta_{h,r}^k$ is a `noise' column vector whose $j$-th row is $r_h^j(s_h^j, a_h^j) - r_h(s_h^j, a_h^j)$. 

Now, by Assumption~\ref{ass:regularizer}, we have for any $\alpha \in (0,1]$, with probability at least $1-\alpha$, for all $k \in [K]$ and all $h \in [H]$
\begin{align*}
    &\mathcal{B}_{1,p} \le \norm{(X_{h,p}^k)^{\top}\eta_{h,p}^k}_{[G_{h,p}^k + \lambda_{\min,p}]^{-1}} \quad \text{and} \quad \mathcal{B}_{1,r} \le \norm{(X_{h,r}^k)^{\top}\eta_{h,r}^k}_{[G_{h,r}^k + \lambda_{\min,r}]^{-1}} \\
    &\mathcal{B}_{2,p}\le  \sqrt{d_1 \lambda_{\max,p} }+ \nu_{p} \quad \text{and} \quad \mathcal{B}_{2,r}\le  \sqrt{d_2 \lambda_{\max,r} }+ \nu_{r},
\end{align*}
where $G_{h,p}^k := (X_{h,p}^k)^{\top} (X_{h,p}^k)$ and $G_{h,r}^k := (X_{h,r}^k)^{\top} (X_{h,r}^k)$. Moreover, we also used the assumption in Definition~\ref{def:lin} that $\norm{\theta_{h,p}} \le \sqrt{d_1}$ and $\norm{\theta_{h,r}} \le \sqrt{d_2}$.

We are only left with two noise terms, which can be bounded by the standard self-normalized inequality (cf.~\cite[Theorem 1]{abbasi2011improved}). In particular, first note that $j$-th element of the `noise' vector $\eta_{h,p}^j$ is $V_{h+1}^j(s_{h+1}^j) - \inner{\phi_h^j(s_h^j,a_h^j) }{\bt_{h,p}} = V_{h+1}^j(s_{h+1}^j) - \mathbb{E}_{s'\sim \mathbb{P}(\cdot\mid s_h^j,a_h^j)}V_{h+1}^j(s')$, which holds by the linear kernel MDP model in Assumption~\ref{def:lin}. Note that, conditioning on $\mathcal{F}_{j,h,1}$, $\eta_{h,p}^j$ is a zero-mean random variable. This holds by the fact that conditioning on $\mathcal{F}_{j,h,1}$, the only randomness comes from $s_{h+1}^j$. Moreover, since $V_{h+1}^j \in [0,H]$, conditioning on $\mathcal{F}_{j,h,1}$, $\eta_{h,p}^j$ is an $H/2$-sub-Gaussian random variable. Also, $\eta_{h,p}^j$ is $\mathcal{F}_{j,h,2}$ measurable and $\phi_h^j(s_h^j,a_h^j)$ is $\mathcal{F}_{j,h,1}$ measurable by Definition~\ref{def:fil}. Similarly, we can see that the noise $\eta_{h,r}^j$ is a $1/2$-sub-Gaussian random variable conditioning on $\mathcal{F}_{j,h-1,2}$ as $\pi^j$ is measurable with respect to $\mathcal{F}_{j,h,1}$ and hence conditioning on $\mathcal{F}_{j,h-1,2}$, the only randomness comes from the reward realization $r_h^j(s_h^j,a_h^j) \in [0,1]$. Moreover, $\eta_{h,r}^j$ is $\mathcal{F}_{j,h,1}$ measurable and $\vp(s_h^j,a_h^j)$ is $\mathcal{F}_{j,h-1,2}$ measurable. With such choices of filtrations, one can now apply standard self-normalized inequality to obtain that for any fixed $\alpha \in (0,1]$, with probability at least $1-3\alpha$, for all $k \in [K]$ and $h \in  [H]$
\begin{align*}
    \mathcal{B}_{1,p} \le \frac{H}{2}\sqrt{2\log\left(\frac{H}{\alpha}\right) + \log \frac{\det({G}_{h,p}^k + \lambda_{\min,p}I)}{\det(\lambda_{\min,p} I)}}
\end{align*}
and 
\begin{align*}
    \mathcal{B}_{1,r} \le \frac{1}{2}\sqrt{2\log\left(\frac{H}{\alpha}\right) + \log \frac{\det({G}_{h,r}^k + \lambda_{\min,r}I)}{\det(\lambda_{\min,r} I)}}.
\end{align*}

% Thus, by the fact that $V_{h+1}^j (\cdot) \in [0,H]$ and Hoeffding’s lemma (cf.~\cite[Lemma 15]{ayoub2020model}), we can conclude that each element of $\eta_{h,p}^k$ is $H/2$-sub-Gaussian. Similarly, by the boundedness of reward, we have each element of $\eta_{h,r}^k$ is $1/2$-sub-Gaussian. Then, with a proper defined filtration, we can apply standard self-normalized inequality to obtain that for any fixed $\alpha \in (0,1]$, with probability at least $1-3\alpha$, for all $k \in [K]$ and $h \in  [H]$
% \begin{align*}
%     \mathcal{B}_{1,p} \le \frac{H}{2}\sqrt{2\log\left(\frac{H}{\alpha}\right) + \log \frac{\det({G}_{h,p}^k + \lambda_{\min,p}I)}{\det(\lambda_{\min,p} I)}}
% \end{align*}
% and 
% \begin{align*}
%     \mathcal{B}_{1,r} \le \frac{1}{2}\sqrt{2\log\left(\frac{H}{\alpha}\right) + \log \frac{\det({G}_{h,r}^k + \lambda_{\min,r}I)}{\det(\lambda_{\min,r} I)}}.
% \end{align*}
Then, we can further bound the two terms by using the trace-determinant lemma (cf.~\cite[Lemma 11]{abbasi2011improved}) and the boundedness assumptions in Assumption~\ref{def:lin}, that is, for all all $h \in [H]$ and $k \in [K]$,  $\norm{\phi_h^k(s_h^k, a_h^k) } \le \sqrt{d_1} H$ and $\norm{\vp(s_h^k,a_h^k)} \le 1$. In particular, we obtain that with probability at least $1-3\alpha$, for all $k \in [K]$ and $h \in [H]$
\begin{align*}
    \mathcal{B}_{1,p} \le \frac{H}{2}\sqrt{2\log\left(\frac{H}{\alpha}\right) +  d_1\log\left(1 + \frac{KH^2}{\lambda_{\min,p}}\right)}
\end{align*}
and 
\begin{align*}
    \mathcal{B}_{1,r} \le \frac{1}{2}\sqrt{2\log\left(\frac{H}{\alpha}\right) +  d_2\log\left(1 + \frac{K}{d_2\lambda_{\min,r}}\right) }.
\end{align*}
Finally, putting everything together, we have for any fixed $\alpha \in (0,1]$, with probability at least $1-3\alpha$ for all $k \in [K]$ and $h \in [H]$,
\begin{align*}
     \norm{\bt_{h,p}- \hat{\bt}_{h,p}^k}_{\Lambda_{h,p}^k} \le \frac{H}{2}\sqrt{2\log\left(\frac{H}{\alpha}\right) +  d_1\log\left(1 + \frac{KH^2}{\lambda_{\min,p}}\right)} + \sqrt{d_1 \lambda_{\max,p} }+ \nu_{p}
\end{align*}
and 
\begin{align*}
     \norm{\bt_{h,r}- \hat{\bt}_{h,r}^k}_{\Lambda_{h,r}^k} \le \frac{1}{2}\sqrt{2\log\left(\frac{H}{\alpha}\right) +  d_2\log\left(1 + \frac{K}{d_2\lambda_{\min,r}}\right) } + \sqrt{d_2 \lambda_{\max,r} }+ \nu_{r}.
\end{align*}

\end{proof}

\section{Useful Lemmas}

The following lemmas are the key components to prove Theorem~\ref{thm:priv_regret}.

\begin{lemma}[Slepian’s inequality; Lemma 5.33 in~\cite{vershynin2010introduction}]
\label{lem:Slepian}
    Consider two Gaussian processes $(X_t)_{t\in T}$ and $(Y_t)_{t\in T}$ whose increments satisfy the inequality $\ex{|X_s -X_t|^2} \le \ex{|Y_s - Y_t|^2}$ for all $s, t\in T$, then, $\ex{\sup_{t\in T} X_t}  \le \ex{\sup_{t\in T}Y_t}$.
\end{lemma}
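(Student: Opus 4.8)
The plan is to prove this by the Gaussian interpolation (``smart path'') method, which is the standard route to the Sudakov--Fernique form of Slepian's inequality (the hypothesis is phrased via increment variances rather than a covariance comparison). First I would reduce to a finite index set $T = \{1,\dots,n\}$: for general $T$ one writes the supremum as an increasing limit over finite subsets and passes to the limit by monotone convergence, so it suffices to compare $\mathbb{E}[\max_{i} X_i]$ and $\mathbb{E}[\max_i Y_i]$ for centered Gaussian vectors $X,Y \in \mathbb{R}^n$. Realizing $X$ and $Y$ as \emph{independent} centered Gaussian vectors on a common probability space, I introduce the interpolation $Z(u) = \sqrt{u}\,X + \sqrt{1-u}\,Y$ for $u\in[0,1]$, so that $Z(1)=X$ and $Z(0)=Y$.

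Because the maximum is not smooth, I would replace it by the soft-max $f_\beta(x) = \beta^{-1}\log\sum_{i=1}^n e^{\beta x_i}$, which satisfies $\max_i x_i \le f_\beta(x) \le \max_i x_i + \beta^{-1}\log n$ and is smooth with $\partial_i f_\beta = p_i$ and $\partial_{ij} f_\beta = \beta\, p_i(\delta_{ij} - p_j)$, where $p_i = e^{\beta x_i}/\sum_\ell e^{\beta x_\ell}$. Setting $\varphi(u) := \mathbb{E}[f_\beta(Z(u))]$, I would differentiate in $u$ and apply Gaussian integration by parts (Stein's identity) term by term, using $X \perp Y$ to kill the cross-covariances; this yields $\varphi'(u) = \tfrac12 \sum_{i,j}(\Sigma^X_{ij} - \Sigma^Y_{ij})\,\mathbb{E}[\partial_{ij} f_\beta(Z(u))]$, where $\Sigma^X_{ij} = \mathbb{E}[X_iX_j]$ and $\Sigma^Y_{ij}$ is defined analogously.

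The crux is to show $\varphi'(u)\le 0$. Here I would exploit that the rows of the Hessian of $f_\beta$ sum to zero, i.e.\ $\sum_j \partial_{ij} f_\beta = 0$, so that $\sum_{i,j} H_{ij}(a_{ii}+a_{jj}) = 0$, where $H_{ij} := \mathbb{E}[\partial_{ij} f_\beta(Z(u))]$ and $a_{ij} := \Sigma^X_{ij} - \Sigma^Y_{ij}$. This lets me rewrite $\sum_{i,j} a_{ij} H_{ij} = -\tfrac12\sum_{i,j} H_{ij}\,(a_{ii} - 2a_{ij} + a_{jj})$. Now $a_{ii} - 2a_{ij} + a_{jj} = \mathbb{E}[|X_i - X_j|^2] - \mathbb{E}[|Y_i - Y_j|^2] \le 0$ by hypothesis (and it vanishes on the diagonal), while for $i\neq j$ the off-diagonal Hessian entry is $H_{ij} = -\beta\,\mathbb{E}[p_ip_j]\le 0$. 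Hence every summand is a product of two nonpositive factors, giving $\varphi'(u)\le 0$. Therefore $\varphi(1)\le\varphi(0)$, i.e.\ $\mathbb{E}[f_\beta(X)]\le\mathbb{E}[f_\beta(Y)]$; letting $\beta\to\infty$ and using the uniform sandwich bound on $f_\beta$ together with dominated convergence yields $\mathbb{E}[\max_i X_i]\le\mathbb{E}[\max_i Y_i]$, which is the claim.

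The main obstacle is the sign bookkeeping in the differentiation step: since the hypothesis controls increment variances rather than covariances, no direct covariance comparison is available, and the row-sum-zero property of the soft-max Hessian is exactly what converts the covariance differences $a_{ij}$ into increment differences of the correct sign. I would also take care to justify the interchange of differentiation and expectation and the use of Stein's identity (both routine, since $f_\beta$ is smooth with bounded gradient and $Z(u)$ is Gaussian), and to handle the reduction to finite $T$ under the mild separability/finiteness assumption that holds in our applications, where $T$ is a compact set such as the unit sphere.
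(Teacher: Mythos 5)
Your proof is correct, but it necessarily takes a different route from the paper, because the paper does not prove this lemma at all: it is imported verbatim as Lemma 5.33 of Vershynin's notes, with the argument deferred to the literature. What you have reconstructed is the standard Gaussian-interpolation (``smart path'') proof of the Sudakov--Fernique inequality, which is indeed the right statement to prove here, since the hypothesis compares increments rather than covariances, so the classical covariance-comparison form of Slepian's lemma does not apply directly. Your key computations check out: Stein's identity plus independence of $X$ and $Y$ gives $\varphi'(u)=\tfrac12\sum_{i,j}(\Sigma^X_{ij}-\Sigma^Y_{ij})\,\mathbb{E}[\partial_{ij}f_\beta(Z(u))]$; the row-sum-zero property of the soft-max Hessian (together with its symmetry) legitimately converts the covariance differences $a_{ij}$ into the increment differences $a_{ii}-2a_{ij}+a_{jj}\le 0$; the diagonal terms vanish; and the off-diagonal entries $H_{ij}=-\beta\,\mathbb{E}[p_ip_j]$ are nonpositive, so each summand of $-\tfrac12\sum_{i\neq j}H_{ij}(a_{ii}-2a_{ij}+a_{jj})$ is nonpositive, giving $\varphi'\le 0$, and the soft-max sandwich finishes the argument as $\beta\to\infty$. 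Two points you should state explicitly if this is written up: (i) the result requires the processes to be centered (or to have equal means) --- your reduction assumes this, and the paper's statement leaves it implicit, but without it the claim is false (compare a nonzero constant process with the zero process, both with vanishing increments); (ii) the reduction to finite $T$ needs separability of the process, which is harmless here because in the paper's only use of the lemma (inside the proof of Lemma~\ref{lem:norm_gaussian}) the index set is $S^{n-1}\times S^{m-1}$ and both processes have continuous paths. As for what each approach buys: the paper's citation keeps the appendix short, while your argument makes the result self-contained and, as a byproduct, proves the more general Sudakov--Fernique form that is actually the statement being invoked.
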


\begin{lemma}[Concentration in the Gauss space]
\label{lem:cgauss}
    Let $f$ be a real valued $1$-Lipschitz function on $\mathbb{R}^n$. Let $X$ be a Gaussian random vector in $\mathbb{R}^n$ whose entries $X_i$ is $i.i.d$ $\mathcal{N}(0,\sigma^2)$. Then, for every $t\ge 0$,
    \begin{align*}
        \mathbb{P}\left\{f(X) - \ex{f(X)} \ge t \right\} \le \exp\left(-\frac{t^2}{2\sigma^2}\right)
    \end{align*}
\end{lemma}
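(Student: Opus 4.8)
The plan is to prove the classical \emph{Gaussian concentration of measure} inequality via the Herbst (entropy) argument built on the Gaussian logarithmic Sobolev inequality; this route is what delivers the \emph{sharp} sub-Gaussian exponent $1/2$ that appears in the statement. Before the core estimate I would perform three routine reductions. First, only the upper tail is requested, so no symmetrization is needed. Second, writing $X = \sigma Z$ with $Z \sim \mathcal{N}(0, I_n)$ and $g(z) := f(\sigma z)$, the problem reduces to a $\sigma$-Lipschitz function of a \emph{standard} Gaussian (equivalently, one keeps $\sigma$ and uses the log-Sobolev constant $\sigma^2$). Third, by mollifying $f$ with a smooth probability kernel $\rho_\epsilon$, the approximant $f_\epsilon = f * \rho_\epsilon$ is smooth, still $1$-Lipschitz, and converges to $f$; hence it suffices to prove the bound for smooth $f$ and pass to the limit at the very end.

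For the core argument I would invoke the Gaussian log-Sobolev inequality (Gross): for the law $\mathcal{N}(0,\sigma^2 I_n)$ and smooth $g$, one has $\mathrm{Ent}(g^2) \le 2\sigma^2\, \mathbb{E}\!\left[\|\nabla g\|^2\right]$, where $\mathrm{Ent}(h) = \mathbb{E}[h \log h] - \mathbb{E}[h]\log \mathbb{E}[h]$. Assuming WLOG $\mathbb{E}[f(X)] = 0$, I apply this with $g = e^{\lambda f/2}$, so that $g^2 = e^{\lambda f}$ and, using $\|\nabla f\| \le 1$, $\|\nabla g\|^2 \le (\lambda^2/4)\,e^{\lambda f}$. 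Setting $H(\lambda) := \mathbb{E}[e^{\lambda f(X)}]$, the entropy equals $\lambda H'(\lambda) - H(\lambda)\log H(\lambda)$, so the inequality reads $\lambda H'(\lambda) - H(\lambda)\log H(\lambda) \le (\sigma^2\lambda^2/2)\,H(\lambda)$.

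Dividing by $\lambda^2 H(\lambda)$ and recognizing the left-hand side as the derivative of $K(\lambda) := \lambda^{-1}\log H(\lambda)$ gives $K'(\lambda) \le \sigma^2/2$. Since $\mathbb{E}[f]=0$ forces $K(\lambda) \to H'(0) = 0$ as $\lambda \to 0^+$, integrating yields $\log H(\lambda) \le \sigma^2\lambda^2/2$ for all $\lambda > 0$. A Chernoff step then finishes the proof:
\[
\mathbb{P}\{f(X) - \mathbb{E}f(X) \ge t\} \le e^{-\lambda t} H(\lambda) \le \exp\!\left(-\lambda t + \tfrac{\sigma^2 \lambda^2}{2}\right),
\]
and optimizing at $\lambda = t/\sigma^2$ produces exactly $\exp(-t^2/(2\sigma^2))$.

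I expect the main obstacle to lie in two places. The sharp constant is dictated \emph{entirely} by the dimension-free log-Sobolev constant $2$; I would either cite Gross directly or obtain it by tensorizing the one-dimensional Gaussian log-Sobolev inequality across coordinates. The second delicate point is the smoothness reduction: one must check that the mollified functions inherit the $1$-Lipschitz bound, that $\mathbb{E}[f_\epsilon] \to \mathbb{E}[f]$, and that the tail probabilities converge, so the inequality survives the limit $\epsilon \to 0$. A cleaner but constant-lossy alternative worth mentioning is Pisier's interpolation $f(X) - f(Y) = \int_0^{\pi/2} \langle \nabla f(X_\theta), X_\theta' \rangle\, d\theta$ along the rotation $X_\theta = X \sin\theta + Y \cos\theta$ with an independent copy $Y$; this sidesteps the log-Sobolev inequality but only gives the weaker exponent $\exp(-2t^2/(\pi^2 \sigma^2))$, so I would keep it as a sanity check and rely on the Herbst argument for the stated sharp bound.
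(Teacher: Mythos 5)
Your proof is correct, but it takes a genuinely different route from the paper. The paper does not really prove the lemma at all: it disposes of it by direct citation to Proposition~2.18 of Ledoux's monograph, the only work being the observation that the Gaussian measure $\mathcal{N}(0,\sigma^2 I)$ satisfies that proposition's hypothesis with constant $c = 1/\sigma^2$. Your Herbst/log-Sobolev derivation is therefore a complete, self-contained replacement (modulo Gross's logarithmic Sobolev inequality, which tensorizes from the one-dimensional case, so the argument stays dimension-free). The individual steps all check out: $\mathrm{Ent}(e^{\lambda f}) = \lambda H'(\lambda) - H(\lambda)\log H(\lambda)$; with $\|\nabla f\|\le 1$ the LSI gives $K'(\lambda) \le \sigma^2/2$ for $K(\lambda) = \lambda^{-1}\log H(\lambda)$; the boundary value $K(0^+) = \mathbb{E}[f(X)] = 0$ legitimizes the integration; and the Chernoff step at $\lambda = t/\sigma^2$ produces exactly $\exp\left(-t^2/(2\sigma^2)\right)$, the sharp constant that the statement and its downstream uses (Lemmas~\ref{lem:norm_gaussian} and~\ref{lem:norm_sym_gaussian}) require. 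Your mollification caveat is also the right technical point and works as you describe: $f * \rho_\epsilon$ stays $1$-Lipschitz and converges uniformly (within $\epsilon$ when $\mathrm{supp}\,\rho_\epsilon \subset B(0,\epsilon)$), so means and upper-tail probabilities survive the limit after an $\epsilon$-widening of the threshold that is then sent to zero. As for what each approach buys: the paper's citation is the pragmatic choice for an auxiliary lemma in an RL paper, while your argument makes the provenance of the constant transparent and adapts immediately to any reference measure satisfying a log-Sobolev inequality, at the cost of roughly a page; and your remark that the Pisier interpolation alternative only yields $\exp(-2t^2/(\pi^2\sigma^2))$ is correctly calibrated, so relying on Herbst for the stated sharp bound is the right call.
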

\begin{proof}
    This result directly follows from Proposition 2.18 in~\cite{ledoux2001concentration} by considering $\gamma$ as the Gaussian measure on $\mathbb{R}^n$ with density of a multivariate normal distribution, i.e., $\mathcal{N}{(0,\sigma^2 I)}$ and hence $c$ in Proposition 2.18 of~\cite{ledoux2001concentration} is equal to $\frac{1}{\sigma^2}$. 
\end{proof}

\begin{lemma}[Operator norm of Gaussian random matrices]
\label{lem:norm_gaussian}
    Let $A$ be an $m \times n$ random matrix whose entries $A_{ij}$ are $i.i.d$ Gaussian random variables $\mathcal{N}(0,\sigma^2)$. Then, for any $t>0$, we have 
    \begin{align*}
        \mathbb{P}\{\norm{A} \ge \sigma(\sqrt{n} +\sqrt{m}) +t\} \le \exp\left(-\frac{t^2}{2\sigma^2}\right).
    \end{align*}
\end{lemma}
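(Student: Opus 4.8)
The plan is to combine the two ingredients assembled just above the statement: Slepian's inequality (Lemma~\ref{lem:Slepian}) to control the expected operator norm, and Gaussian concentration (Lemma~\ref{lem:cgauss}) to control the fluctuation of $\norm{A}$ about its mean. First I would write the operator norm as a supremum of a Gaussian process indexed by pairs of unit vectors,
\begin{align*}
\norm{A} = \sup_{u \in S^{m-1},\, v \in S^{n-1}} X_{u,v}, \qquad X_{u,v} := \inner{u}{Av} = \sum_{i=1}^m\sum_{j=1}^n A_{ij} u_i v_j,
\end{align*}
and introduce the comparison process $Y_{u,v} := \inner{g}{u} + \inner{h}{v}$, where $g \in \mathbb{R}^m$ and $h \in \mathbb{R}^n$ are independent random vectors with i.i.d.\ $\mathcal{N}(0,\sigma^2)$ entries.

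The crux is the increment comparison. A direct computation gives $\ex{|X_{u,v} - X_{u',v'}|^2} = \sigma^2\norm{uv^\top - u'v'^\top}_{\F}^2 = \sigma^2\left(2 - 2\inner{u}{u'}\inner{v}{v'}\right)$, using $\norm{uv^\top}_{\F}=1$ and $\inner{uv^\top}{u'v'^\top}_{\F} = \inner{u}{u'}\inner{v}{v'}$, while $\ex{|Y_{u,v} - Y_{u',v'}|^2} = \sigma^2\left(\norm{u-u'}^2 + \norm{v-v'}^2\right) = \sigma^2\left(4 - 2\inner{u}{u'} - 2\inner{v}{v'}\right)$. Writing $a = \inner{u}{u'}$ and $b = \inner{v}{v'}$, both in $[-1,1]$, the required inequality $\ex{|X_{u,v}-X_{u',v'}|^2} \le \ex{|Y_{u,v}-Y_{u',v'}|^2}$ reduces to $0 \le 2(1-a)(1-b)$, which holds since both factors are nonnegative. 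Slepian's inequality then yields $\ex{\norm{A}} = \ex{\sup_{u,v} X_{u,v}} \le \ex{\sup_{u,v} Y_{u,v}} = \ex{\norm{g}} + \ex{\norm{h}} \le \sigma\sqrt{m} + \sigma\sqrt{n}$, where the last bound is Jensen's inequality, $\ex{\norm{g}} \le \sqrt{\ex{\norm{g}^2}} = \sigma\sqrt{m}$ and similarly for $h$.

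For the concentration step I would view $A \mapsto \norm{A}$ as a function of the $mn$ i.i.d.\ Gaussian entries of $A$. Since $\left|\norm{A} - \norm{B}\right| \le \norm{A-B} \le \norm{A-B}_{\F}$, this map is $1$-Lipschitz with respect to the Euclidean norm on $\mathbb{R}^{mn}$, so Lemma~\ref{lem:cgauss} (applied in ambient dimension $mn$) gives $\mathbb{P}\left\{\norm{A} - \ex{\norm{A}} \ge t\right\} \le \exp\left(-t^2/(2\sigma^2)\right)$. Finally, since $\ex{\norm{A}} \le \sigma(\sqrt{n}+\sqrt{m})$, the event $\left\{\norm{A} \ge \sigma(\sqrt{n}+\sqrt{m}) + t\right\}$ is contained in $\left\{\norm{A} \ge \ex{\norm{A}} + t\right\}$, and the stated bound follows by monotonicity of probability.

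I expect the main obstacle to be the increment comparison: one must choose the right comparison process $Y$ and verify the variance inequality uniformly over all index pairs, which hinges on the elementary but easy-to-botch identity $\norm{uv^\top - u'v'^\top}_{\F}^2 = 2 - 2\inner{u}{u'}\inner{v}{v'}$ together with the factorization $2(1-a)(1-b) \ge 0$. The remaining steps — the Lipschitz property, Jensen's inequality, and the containment of events — are routine.
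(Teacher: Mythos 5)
Your proposal is correct and follows essentially the same route as the paper's own proof: a Slepian comparison of $X_{u,v}=\inner{Au}{v}$ against $Y_{u,v}=\inner{g}{u}+\inner{h}{v}$ to bound $\ex{\norm{A}}$ by $\sigma(\sqrt{m}+\sqrt{n})$, followed by Gaussian concentration for the $1$-Lipschitz map $A\mapsto\norm{A}$. Your increment computation via $\norm{uv^\top-u'v'^\top}_{\F}^2 = 2-2\inner{u}{u'}\inner{v}{v'}$ and the factorization $2(1-a)(1-b)\ge 0$ is algebraically equivalent to the paper's cross-term calculation, just packaged more cleanly.
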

\begin{proof}
    We will first show that 
    \begin{align*}
        \ex{\norm{A}} \le \sigma(\sqrt{m} + \sqrt{n})
    \end{align*}
    by using Lemma~\ref{lem:Slepian}. To this end, note that the operator norm of $A$ can be computed as follows.
    \begin{align*}
        \norm{A} = \max_{u \in S^{n-1}, v\in S^{m-1}}\inner{Au}{v}.
    \end{align*}
    where $S^{n-1}$ and $s^{m-1}$ are unit spheres. Thus, $\norm{A}$ can be regarded as the supremum of the Gaussian process $X_{u,v}:=\inner{Au}{v}$ indexed by the pair of vectors $(u,v)\in S^{n-1} \times S^{m-1}$. Let us define $Y_{u,v}:=\inner{g}{u} + \inner{h}{v}$ where $g\in \mathbb{R}^n$ and $h \in \mathbb{R}^m$ are independent Gaussian random vectors whose entries are $i.i.d$ Gaussian random variables $\mathcal{N}(0,\sigma^2)$. Now, we compare the increments of these two Gaussian processes for every $(u,v),(u',v') \in S^{n-1} \times S^{m-1}$.
    \begin{align*}
        &\ex{|X_{u,v} -X_{u',v'}|^2}\\ &= \sigma^2\sum_{i=1}^n\sum_{j=1}^m |u_iv_j -u'_iv'_j|^2\\
        &=\sigma^2\norm{uv^T - u'v'^T}_{F}^2\\
        &=\sigma^2\norm{(u-u')v^T + u'(v-v')^T}_{F}^2\\
        &=\sigma^2\left(\norm{u-u'}_2^2 + \norm{v-v'}_2^2\right)\\
        & \quad+  2\sigma^2\text{trace}\left(v(u-u')^Tu'(v-v')^T\right)\\
        &=\sigma^2\left(\norm{u-u'}_2^2 + \norm{v-v'}_2^2 + 2(u^Tu' -1)(1-v^Tv') \right)\\
        &\le \sigma^2\left(\norm{u-u'}_2^2 + \norm{v-v'}_2^2\right)\\
        &= \ex{|Y_{u,v} -Y_{u',v'}|^2}.
    \end{align*}
    Therefore, Lemma~\ref{lem:Slepian} applies here, and hence
    \begin{align*}
        \ex{\norm{A}}& = \ex{\max_{(u,v)}{X_{u,v}}} \le \ex{\max_{(u,v)}{Y_{u,v}}}\\
        &= \ex{\norm{g}_2} + \ex{\norm{h}_2} \le \sigma(\sqrt{m} + \sqrt{n}).
    \end{align*}
    Finally, note that $\norm{A}$ is a $1$-Lipschitz function of $A$ when considered as a vector in $\mathbb{R}^{mn}$. Then, consider the function $f$ in Lemma~\ref{lem:cgauss} as the operator norm, we directly obtain the result.
 \end{proof}   
 
\begin{lemma}[Operator norm of symmetric Gaussian random matrices]
\label{lem:norm_sym_gaussian}
    Let $A$ be an $n \times n$ random symmetric matrix whose entries $A_{ij}$ on and above the diagonal are $i.i.d$ Gaussian random variables $\mathcal{N}(0,\sigma^2)$. Then, for any $t>0$, we have \begin{align*}
        \norm{A} \le 4\sigma\sqrt{n} + 2t,
    \end{align*}
    with probability at least $1-2\exp(-\frac{t^2}{2\sigma^2})$.
\end{lemma}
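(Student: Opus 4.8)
The plan is to reduce the symmetric case to the non-symmetric bound already established in Lemma~\ref{lem:norm_gaussian}, combined with the Gaussian concentration inequality of Lemma~\ref{lem:cgauss}. The key structural observation is that a symmetric matrix $A$ with i.i.d.\ $\mathcal{N}(0,\sigma^2)$ entries on and above the diagonal can be written as $A = B + B^{\top} - D$, where $B$ is the upper-triangular part of $A$ (including the diagonal) viewed as a full $n\times n$ matrix with independent $\mathcal{N}(0,\sigma^2)$ entries in its upper-triangular positions and zeros below, and $D$ is the diagonal of $A$. A cleaner route that avoids the diagonal correction is to let $G$ be an $n\times n$ matrix whose \emph{all} entries are i.i.d.\ $\mathcal{N}(0,\sigma^2)$, and observe that $\frac{1}{\sqrt 2}(G + G^{\top})$ has the same distribution as the symmetric matrix $A$ on the off-diagonal entries (each such entry being a sum of two independent Gaussians, hence $\mathcal{N}(0,2\sigma^2)$, scaled) — so I would instead bound $\mathbb{E}\norm{A}$ directly via $\norm{A} \le \norm{B} + \norm{B^{\top}} + \norm{D} = 2\norm{B} + \norm{D}$.

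First I would bound $\mathbb{E}\norm{A}$. By the triangle inequality applied to the decomposition $A = B + B^{\top} - D$, I get $\norm{A}\le 2\norm{B}+\norm{D}$. Since $B$ has independent $\mathcal{N}(0,\sigma^2)$ entries (the lower-triangular entries being deterministically zero only decreases the norm relative to a full Gaussian matrix in expectation), Lemma~\ref{lem:norm_gaussian} gives $\mathbb{E}\norm{B}\le \sigma(\sqrt n+\sqrt n)=2\sigma\sqrt n$, and the diagonal $D$ is a Gaussian vector in $\mathbb{R}^n$ whose operator norm $\norm{D}=\max_i|D_{ii}|$ satisfies $\mathbb{E}\norm{D}=O(\sigma\sqrt{\log n})$, which is lower-order. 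Collecting terms yields $\mathbb{E}\norm{A}\le 4\sigma\sqrt n$ (absorbing the diagonal contribution into the constant, which is why the stated bound carries a factor $4$ rather than $2$).

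Next I would upgrade the expectation bound to a high-probability bound via concentration. The map $A\mapsto\norm{A}$, regarded as a function of the $\binom{n+1}{2}$ free Gaussian entries viewed as a vector, is $\sqrt 2$-Lipschitz rather than $1$-Lipschitz, because each off-diagonal entry appears in two symmetric positions of $A$; this is precisely where the factor $2$ in front of $t$ in the statement originates. Applying Lemma~\ref{lem:cgauss} to the (rescaled) $1$-Lipschitz function $A\mapsto \tfrac{1}{\sqrt2}\norm{A}$ — or equivalently tracking the Lipschitz constant explicitly — gives $\prob{\norm{A}-\mathbb{E}\norm{A}\ge 2t}\le \exp(-t^2/(2\sigma^2))$, and symmetrically a two-sided bound, contributing the factor $2$ in the probability $1-2\exp(-t^2/(2\sigma^2))$. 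Combining with $\mathbb{E}\norm{A}\le 4\sigma\sqrt n$ yields $\norm{A}\le 4\sigma\sqrt n+2t$ with the claimed probability.

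The main obstacle I anticipate is bookkeeping the Lipschitz constant correctly: because the symmetry constraint makes each off-diagonal entry enter twice, one must be careful that the concentration function is $\sqrt2$-Lipschitz (not $1$-Lipschitz), which propagates into both the factor $2$ on $t$ and the effective variance in the tail exponent. The second delicate point is confirming that the diagonal's contribution is genuinely lower-order and can be cleanly folded into the constant $4$; this requires a short $\max_i|D_{ii}|$ tail estimate but introduces no essential difficulty.
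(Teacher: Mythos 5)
There is a genuine gap in your expectation bound, and it sits exactly where you claim the constant $4$ comes from. Your decomposition $A = B + B^{\top} - D$ counts the diagonal twice and then subtracts it, so the triangle inequality gives $\ex{\norm{A}} \le 2\,\ex{\norm{B}} + \ex{\norm{D}}$. The only bound available for the triangular piece from the paper's toolkit (the Slepian comparison inside Lemma~\ref{lem:norm_gaussian}) is $\ex{\norm{B}} \le 2\sigma\sqrt{n}$, which already consumes the entire budget $4\sigma\sqrt{n}$; the diagonal term $\ex{\norm{D}} = \ex{\max_i |D_{ii}|} \asymp \sigma\sqrt{\log n}$ is strictly positive and cannot be ``absorbed into the constant,'' because the constant is pinned at $4$ by the statement and the main term has no slack. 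What your argument actually proves is $\norm{A} \le 4\sigma\sqrt{n} + C\sigma\sqrt{\log n} + 2t$, which is weaker than the lemma as stated (though it would suffice for how the lemma is used downstream, where constants disappear into $O(\cdot)$). Your accounting of the factors of $2$ is also not the paper's mechanism: in the paper, both the $2t$ and the $2$ in the failure probability come from splitting $A$ into \emph{two} triangular pieces and taking a union bound over the two tail events, not from a $\sqrt{2}$-Lipschitz constant or from two-sidedness of the concentration.

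The fix is small: put the diagonal in exactly one of the two pieces, i.e., write $A = A^+ + A^-$ with $A^+$ the upper triangular part \emph{including} the diagonal and $A^-$ the strictly lower triangular part, with no subtraction. This is precisely the paper's route: it applies the tail bound of Lemma~\ref{lem:norm_gaussian} to $A^+$ and $A^-$ separately, so that each satisfies $\norm{A^{\pm}} \le 2\sigma\sqrt{n} + t$ with probability at least $1-\exp(-t^2/(2\sigma^2))$, takes a union bound, and concludes via $\norm{A} \le \norm{A^+} + \norm{A^-}$. Alternatively, your expectation-plus-concentration scheme works verbatim once the decomposition is corrected: $\ex{\norm{A}} \le \ex{\norm{A^+}} + \ex{\norm{A^-}} \le 4\sigma\sqrt{n}$, and your Lipschitz bookkeeping, which is correct (the map from the $\binom{n+1}{2}$ free entries to $\norm{A}$ is $\sqrt{2}$-Lipschitz), then yields the one-sided bound $\prob{\norm{A} \ge 4\sigma\sqrt{n} + 2t} \le \exp(-t^2/\sigma^2)$, which is in fact \emph{stronger} than the stated $2\exp(-t^2/(2\sigma^2))$; no two-sided bound is needed. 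One caveat your proof shares with the paper's: Lemma~\ref{lem:norm_gaussian} is stated for matrices with all entries i.i.d.\ Gaussian, so invoking it (or its expectation bound) for triangular matrices strictly requires the monotonicity-under-zeroing argument you sketch, which does follow from the same Slepian comparison used in its proof.
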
 
 \begin{proof}
     Decompose the matrix $A$ into the upper triangular matrix $A^+$ and the lower triangular matrix $A^-$ such that $A = A^+ + A^-$. Note that without loss of generality, the entries on the diagonal are in $A^+$. Then, apply Lemma~\ref{lem:norm_gaussian} to both $A^+$ and $A^-$, and by a union bound, we have that the following inequalities hold simultaneously 
     \begin{align*}
         \norm{A^+} \le 2\sigma\sqrt{n} + t \quad\text{and}\quad \norm{A^-} \le 2\sigma\sqrt{n} + t
     \end{align*}
    with probability at least $1-2\exp(-\frac{t^2}{2\sigma^2})$ for any $t > 0$. Finally, by triangle inequality $\norm{A} \le \norm{A^+} + \norm{A^-}$, we prove the result.
 \end{proof}
 \begin{lemma}[Concentration of chi-square; Corollary to Lemma 1 of~\cite{laurent2000adaptive}]
 \label{lem:chi-square}
    Let $U$ be a $\chi^2$ statistic with $D$ degrees of freedom. Then, for any positive $x$,
    \begin{align*}
        \mathbb{P}\left\{ U \ge D + 2\sqrt{Dx} + 2x\right\} \le \exp(-x).
    \end{align*}
 \end{lemma}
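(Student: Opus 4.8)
The plan is to reduce the tail bound to a Chernoff estimate on the moment generating function (MGF) of a chi-square variable, and then to a single elementary scalar inequality. Write $U = \sum_{i=1}^{D} Z_i^2$ with $Z_1,\dots,Z_D$ i.i.d.\ standard normal. The first step computes the exact MGF: for every $\lambda < 1/2$ a one-dimensional Gaussian integral gives $\mathbb{E}[e^{\lambda Z_i^2}] = (1-2\lambda)^{-1/2}$, so by independence
\[
\mathbb{E}[e^{\lambda U}] = (1-2\lambda)^{-D/2}, \qquad 0 \le \lambda < \tfrac{1}{2}.
\]

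The second step is a Chernoff bound with optimization over $\lambda$. For any $s > 0$ and $0 \le \lambda < 1/2$,
\[
\mathbb{P}\{U \ge s\} \le e^{-\lambda s}\,\mathbb{E}[e^{\lambda U}] = \exp\!\left(-\lambda s - \tfrac{D}{2}\ln(1-2\lambda)\right).
\]
The exponent is minimized at $\lambda^* = (s-D)/(2s)$, which lies in $(0,\tfrac{1}{2})$ whenever $s > D$; substituting $1 - 2\lambda^* = D/s$ and $\lambda^* s = (s-D)/2$ collapses the bound to the clean form
\[
\mathbb{P}\{U \ge s\} \le \exp\!\left(-\tfrac{D}{2}\Big(\tfrac{s}{D} - 1 - \ln\tfrac{s}{D}\Big)\right).
\]

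The third step specializes to $s = D + 2\sqrt{Dx} + 2x$. Setting $u := \sqrt{x/D}$ so that $s/D = 1 + 2u + 2u^2$ and $x = Du^2$, the desired conclusion $\mathbb{P}\{U \ge s\} \le e^{-x}$ amounts to $\tfrac{D}{2}\big(\tfrac{s}{D} - 1 - \ln\tfrac{s}{D}\big) \ge x$, which after dividing by $D/2$ and cancelling the common $2u^2$ term reduces to $\ln(1 + 2u + 2u^2) \le 2u$, i.e.\ $1 + 2u + 2u^2 \le e^{2u}$. This final inequality holds for all $u \ge 0$ by a termwise comparison of power series, since
\[
e^{2u} = 1 + 2u + 2u^2 + \sum_{k \ge 3}\frac{(2u)^k}{k!} \ge 1 + 2u + 2u^2.
\]

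The main obstacle here is purely bookkeeping: carrying the Chernoff optimization through carefully so that the exponent reduces to the scalar function $v \mapsto v - 1 - \ln v$ evaluated at $v = s/D$, after which the entire tail bound is governed by the elementary estimate $1 + 2u + 2u^2 \le e^{2u}$. No concentration machinery beyond the MGF is needed. (Equivalently, one may invoke Lemma~1 of~\cite{laurent2000adaptive} with unit weights $a_i = 1$, for which $\|a\|_2 = \sqrt{D}$ and $\|a\|_\infty = 1$, yielding $\mathbb{P}\{U - D \ge 2\sqrt{Dx} + 2x\} \le e^{-x}$ directly; the computation above is precisely the specialization of their argument to this case.)
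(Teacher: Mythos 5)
Your proof is correct, and it is essentially the paper's own approach: the paper establishes this lemma purely by citation to Lemma 1 of Laurent and Massart (2000), which is exactly the specialization you note in your closing parenthetical (unit weights $a_i=1$, $\|a\|_2=\sqrt{D}$, $\|a\|_\infty=1$). Your self-contained Chernoff/MGF derivation checks out in every step --- the optimizer $\lambda^* = (s-D)/(2s)$ is valid since $s = D + 2\sqrt{Dx} + 2x > D$ for $x>0$, the exponent correctly collapses to the Cram\'er rate $\tfrac{D}{2}\bigl(\tfrac{s}{D} - 1 - \ln\tfrac{s}{D}\bigr)$, and the final reduction to $e^{2u} \ge 1 + 2u + 2u^2$ is sound --- so you have in effect supplied the proof the paper outsources.
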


\end{appendix}

\end{document}